\documentclass[11pt]{article}

\usepackage[T1]{fontenc}
\usepackage{lmodern}
\usepackage[a4paper,margin=1in]{geometry}
\usepackage{amsmath,amssymb,amsthm,mathtools}
\usepackage{booktabs,array,multirow,tabularx,longtable}
\usepackage{enumitem}
\usepackage{graphicx}
\usepackage[dvipsnames]{xcolor}
\usepackage[authoryear,round,sort]{natbib}
\usepackage[hidelinks]{hyperref}
\usepackage{tikz}
\usetikzlibrary{arrows.meta,positioning,calc,shapes.geometric,fit,backgrounds}
\usepackage{algorithm}
\usepackage{algpseudocode}
\usepackage{caption}
\usepackage{subcaption}
\usepackage{placeins}
\usepackage{setspace}
\usepackage{titlesec}
\usepackage{microtype}
\usepackage{authblk}
\usepackage{url}

\newcolumntype{L}{>{\raggedright\arraybackslash}X}
\titleformat{\section}{\large\bfseries}{\thesection.}{0.5em}{}
\titleformat{\subsection}{\normalsize\bfseries}{\thesubsection.}{0.5em}{}
\titleformat{\subsubsection}{\normalsize\itshape}{\thesubsubsection.}{0.5em}{}

\theoremstyle{definition}
\newtheorem{definition}{Definition}

\theoremstyle{plain}
\newtheorem{lemma}{Lemma}
\newtheorem{proposition}{Proposition}
\newtheorem{theorem}{Theorem}
\newtheorem{corollary}{Corollary}

\newcommand{\Ent}{\operatorname{Ent}}
\newcommand{\GR}{\operatorname{GR}}
\newcommand{\CGR}{\operatorname{CGR}}
\newcommand{\supp}{\operatorname{supp}}
\newcommand{\Ree}{\operatorname{Re}}
\newcommand{\Leaves}{\operatorname{Leaves}}
\newcommand{\Path}{\mathcal{P}}
\newcommand{\SIB}{\operatorname{SIB}}

\title{\bfseries Relevance-Aware Rule: Structural Deletion of Irrelevant Conditions in Decision Trees}
\author[]{Jung-Sik Hong}
\author[]{Jeongeon Lee}
\author[]{Min Kyu Sim}
\author[]{Sangheum Hwang\thanks{Corresponding author.}}
\affil[]{Department of Data Science, Seoul National University of Science and Technology\newline
\texttt{\{hong; jelee; mksim; shwang\}@seoultech.ac.kr}}
\date{}

\begin{document}
\maketitle

\begin{abstract}
Decision trees generate interpretable if--then rules, yet they contain irrelevant conditions (IRCs). These IRCs arise from the structural mechanism of tree splitting and persist even in modern optimal sparse tree induction algorithms. Existing IRC deletion methods overlook this structural mechanism; therefore, they either preserve the original tree too loosely to remain reliable, or too strictly to achieve meaningful simplification. This study provides theoretical foundations for reliable IRC deletion by establishing theorems and propositions related to the underlying IRC mechanism. The key finding is that a binary split shifts class proportions in opposite directions relative to the parent. Specifically, an increase in the class-1 proportion along one branch necessitates an increase in the class-0 proportion along its sibling, thereby generating a C1-link and a C0-link. Based on this structural fact, we propose a structural IRC deletion framework. Relative to each leaf, links that increase the leaf-class proportion are matched, whereas links that increase the proportion of the opposite leaf-class are mismatched. These mismatched links are flagged as structurally suspicious IRC candidates. Rather than deleting them outright, the framework rigorously diagnoses their relevance by assessing prediction reliability. It selectively deletes conditions that are structurally and empirically irrelevant, while strictly protecting those whose deletion would reduce the rule's reliability. Experimental results confirm that the proposed framework achieves substantial rule simplification without sacrificing the reliability of the original tree.
\end{abstract}

\textbf{Keywords:} decision tree; interpretability; irrelevant condition; relevance-aware rule; pruning

\section{Introduction}
\label{sec:introduction}

Decision trees translate model decisions into explicit root-to-leaf rules.
A terminal leaf defines a rule of the form
\[
C_1 \wedge C_2 \wedge \cdots \wedge C_r \Rightarrow y=c,
\]
where \(C_1,\ldots,C_r\) are the conditions on the root-to-leaf path and \(c\) is the leaf class.
Root-to-leaf rules support local interpretation, but recursive tree induction forces each descendant leaf to inherit every upstream condition.
As a result, the root-to-leaf path rule is often over-specified, accumulating \emph{irrelevant conditions} (IRCs) that do not contribute to its decision. Because these IRCs hinder interpretability, their deletion is necessary to ensure clear and concise explanations.

IRC deletion addresses a distinct problem from \emph{global tree simplification}, which encompasses both tree pruning and optimal tree induction. While tree pruning removes or replaces leaves and subtrees to reduce the overall tree size, IRC deletion removes an internal antecedent from a surviving path. A globally simplified tree may therefore still contain IRCs. In addition, IRCs frequently remain even in optimal sparse tree induction algorithms that generate small-sized trees \citep{izza2022tackling}.

To perform the path-level IRC deletion, existing literature has largely explored two prominent directions. \emph{Empirical rule post-pruning} deletes antecedents using statistical test or certainty criteria and may convert a partitioning tree into an overlapping rule-set classifier that requires conflict resolution \citep{quinlan1987generating,quinlan1993c45}. \emph{Logical path-redundancy testing} instead certifies whether antecedent deletion preserves the decision tree's class consequence \citep{izza2022tackling}. They do not provide a structural explanation for why certain conditions become irrelevant in certain paths of trees.

We propose a structural IRC deletion method that uncovers the underlying mechanics of why a condition of a path becomes irrelevant. Our starting point is an elementary but useful \emph{structural fact}: in a binary split, the parent class proportion is a weighted average of the two child proportions. Hence, if one child increases the class-1 proportion, the other child necessarily increases the class-0 proportion. We call the corresponding outgoing links \emph{class-1-increasing links} (C1-links) and \emph{class-0-increasing links} (C0-links). This $C1/C0$ link annotation serves as the primary structural signal used in this paper. 

The $C1/C0$ annotation itself is merely a local diagnostic signal, not a judgment of relevance. To formulate such a judgment, we evaluate this signal relative to a specific leaf node of class $c$. A link in the path is \emph{matched} when its orientation aligns with the leaf class $c$, and \emph{mismatched} when it points toward the opposite class. This \emph{leaf-relative diagnosis} provides a structural warning, yet it remains inconclusive; a mismatched link is structurally suspicious but not inherently irrelevant.

To rigorously assess whether a condition should be deleted, we define \emph{relevance-aware rules}---rules that preserve conditions essential for class support or reliability refinement, while removing only those that are structurally and empirically redundant. To this end, we distinguish three analytical layers:
\begin{enumerate}[label=(\roman*)]
\item \emph{Local annotation}: where a link is tagged as C1, C0, or neutral;
\item \emph{Leaf-relative diagnosis}: where a link is classified as matched or mismatched;
\item \emph{Path-level effect}: where deleting the condition is evaluated by its impact on the rule.
\end{enumerate}
At the third layer, we identify a \emph{positive-relevance effect} (where a condition preserves path reliability), a \emph{negative-relevance effect} (where a condition identifies a lower-reliability subregion), or a \emph{probabilistic IRC effect} (where the condition is redundant). By explicitly separating these layers, we avoid the false assumption that every mismatched link is inherently irrelevant, ensuring that our final \emph{relevance-aware rules} are both compact and structurally faithful.

Analytical-layer application depends on how each procedure certifies reliable deletion within a rule domain. Structural candidate generation is domain-independent. The acceptance criterion that converts a candidate into an accepted deletion, however, must be specialized by domain to remain reliable.

Two complementary procedures implement this separation. Method~1, the broader \emph{mismatch-guided} procedure, generates candidates from leaf-relative C1/C0 mismatches. Its acceptance criterion is domain-specific: for deterministic rules, a candidate set is accepted only through a joint \emph{hard-implication certificate}, guaranteeing \(R \Rightarrow y=c\); for class-probability rules, mismatches within label-homogeneous ancestor subtrees are preserved by default, while mismatches elsewhere are deleted only after verifying path-level reliability preservation within a two-sided tolerance. Method~2, the narrower \emph{sibling-certified} procedure, certifies deletion directly from same-class sibling-leaf topology: whenever one child of a split is a same-class leaf and the opposite subtree contains a descendant leaf of that class, the link into that subtree is removable regardless of orientation. This certificate requires no domain-specific criterion, and guarantees exact preservation of the fitted tree's predictions, full coverage, and zero opposite-class rule conflict in either domain. Method~1 thus prioritizes broader relevance-aware compression, whereas Method~2 offers conservative simplification with exact source-tree preservation.

The study makes three contributions:
\begin{itemize}[leftmargin=*,itemsep=0.15em,topsep=0.25em]
    \item Find the structural mechanics of IRC in decision trees.
    \item Formalize path-level relevance semantics to diagnose the functional role of path conditions across three distinct analytical layers.
    \item Develop a structural IRC deletion framework that efficiently identifies and removes IRCs, achieving scalability linear in the number of leaf rules.
\end{itemize}

The remainder of the paper is organized as follows. Section~\ref{sec:related-work} positions the study relative to tree pruning, rule post-pruning, and logical path-redundancy testing. Section~\ref{sec:problem-setting} defines the problem and notation. Section~\ref{sec:method} presents the proposed methods, Section~\ref{sec:experiments} reports the experiments, Section~\ref{sec:discussion} discusses, and Section~\ref{sec:conclusion} concludes the paper.

\section{Related Work}
\label{sec:related-work}

Structural IRC deletion differs from global tree simplification, empirical rule post-pruning, and logical path-redundancy testing in both the object simplified and the behavior preserved.

\subsection{Global tree simplification}

Subtree pruning reduces global tree complexity but does not directly shorten a surviving leaf rule. Cost-complexity, reduced-error, and pessimistic pruning replace subtrees with leaves or select a smaller tree according to predictive and complexity criteria \citep{breiman1984classification,bohanec1994trading}. Subtree-pruning methods can reduce the number of nodes and leaves, but every retained leaf still inherits all conditions on its retained root-to-leaf path. A globally smaller tree may therefore still produce a locally over-specified rule.

Optimization-based tree induction reduces dependence on greedy split selection but does not directly address path-internal condition deletion. Optimization-based tree-induction methods optimize tree-level objectives under structural constraints rather than relying only on greedy top-down growth \citep{bertsimas2017optimal,hu2019optimal}. Global tree optimization, however, does not preclude path-internal IRCs \citep{izza2022tackling}. The proposed IRC-deletion methods can therefore post-process both greedy and optimization-based fitted trees.

\subsection{Empirical rule post-pruning}

Rule post-pruning directly shortens antecedents, but its primary objective is the predictive performance of the resulting rule-set classifier. \citet{quinlan1987generating} extracts rules from a decision tree and deletes antecedents using statistical relevance and certainty criteria. C4.5rules further combines antecedent deletion with estimated-error evaluation, subset selection, and rule ordering \citep{quinlan1993c45}. Related rule-learning methods, including PRISM, RIPPER, and PART, construct or select rule sets outside the original tree partition through modular or separate-and-conquer procedures \citep{cendrowska1987prism,cohen1995fast,frank1998generating}. Independent rule simplification may create overlapping rules with different class predictions. The resulting classifier behavior therefore depends on rule ordering, conflict resolution, coverage, and fallback policies.

Empirical antecedent deletion can improve aggregate rule quality while changing the population represented by an individual leaf rule. Removing an antecedent expands the descendant rule beyond the original leaf region, and the expanded rule may cover cases from sibling or ancestor regions with different structural meanings. A higher empirical class probability can therefore reflect broader coverage rather than faithful simplification of the original leaf. Section~\ref{sec:class-probability-relevance} defines expansion that absorbs a descendant rule into a broader ancestor region as \emph{ancestor-rule assimilation} and evaluates ancestor-rule assimilation separately from predictive accuracy.

Rule compactness alone does not establish explanation quality. Work on comprehensible and interpretable rule models emphasizes that a short rule may omit conditions that identify scientifically or operationally meaningful subgroups \citep{freitas2014comprehensible,rudin2022interpretable}. For decision-tree path rules, a condition may be unnecessary for preserving the final class label but still distinguish leaf regions with different class reliability. The distinction between label preservation and reliability preservation motivates evaluating the resulting rule rather than judging deletion only by antecedent count or one-sided reliability improvement.

\subsection{Logical path-redundancy testing}

Logical path-redundancy methods provide exact preservation guarantees for individual path decisions. \citet{izza2022tackling} test whether deleting a path condition preserves the class consequence under the fitted tree. The path condition is the unit of analysis, and the acceptance criterion prohibits changes to the fitted tree's class decision. Exact fitted-tree preservation is stronger than an empirical tolerance criterion but may be more conservative when limited behavioral change is acceptable. Logical path-redundancy testing certifies deletion safety after tree fitting but does not identify structurally suspicious conditions before evaluation.

\subsection{Position of Relevance-Aware Structural IRC Deletion}

Our structural IRC deletion procedure closes the methodological gap that global tree simplification, empirical rule post-pruning, and logical path-redundancy testing leave open between aggressive antecedent deletion and exact fitted-tree preservation. It does so by using split-induced tree structure to identify suspicious inherited conditions and by associating each accepted deletion with an explicit preservation target. The resulting procedure removes path conditions from a fixed fitted tree while retaining one output rule for every source leaf.

The proposed framework uses two complementary forms of structural evidence. Method 1 uses leaf-relative C1/C0 mismatch together with setting-specific deterministic or reliability-based acceptance criteria, whereas Method 2 uses same-class sibling-leaf topology as a sample-independent certificate for exact source-tree preservation.

\section{Problem Setting and Notation}
\label{sec:problem-setting}

Relevance-aware structural IRC deletion requires explicit definitions of tree links, rule regions, deletion sets, and training-rule reliability.

Let
\[
\mathcal S=\{(x_i,y_i)\}_{i=1}^{M}
\]
be the training sample, where \(M\) is the number of training instances and \(y_i\in\{0,1\}\), and let \(T\) be the fitted binary source tree.
For node \(N\), \(n(N)\) is its training count, \(n^c(N)\) is its class-\(c\) count, \(p_c(N)=n^c(N)/n(N)\) is its class-\(c\) proportion, and \(\Ree(N)\) is its predicted class.

Table~\ref{tab:notation} summarizes the notation; quantities indexed by \(\mathcal S\) are computed on the training sample.

\begin{table}[h!]
\centering
\caption{Core notation for the tree, rule regions, structural diagnostics, and relevance-aware structural IRC deletion.}
\label{tab:notation}
\footnotesize
\begin{tabularx}{\textwidth}{@{}>{\raggedright\arraybackslash}p{0.24\textwidth}L@{}}
\toprule
Notation & Meaning\\
\midrule
\(\mathcal S=\{(x_i,y_i)\}_{i=1}^{M}\) & Training sample of \(M\) labeled instances.\\
\(N_p,N_c\), \(e=(N_p,N_c)\) & Parent, child, and their directed tree link.\\
\(C(e)\) & Path condition associated with link \(e\).\\
\(n(N),n^c(N),p_c(N)\) & Node count, class-\(c\) count, and class-\(c\) proportion.\\
\(\Ree(N)\) & Predicted class of node \(N\).\\
\(\Path(L)\), \(\GR(L)\) & Root-to-leaf link sequence and generated rule for leaf \(L\).\\
\(R^{-Q}\) & Rule obtained from \(R\) by deleting links in \(Q\).\\
\(\mathcal D_{\mathcal S}(R)\) & Indices of training cases satisfying rule \(R\).\\
\(\supp_{\mathcal S}(R)\) & Rule support \(|\mathcal D_{\mathcal S}(R)|\).\\
\(\hat p_{c,\mathcal S}(R)\) & Rule reliability: class-\(c\) fraction among cases satisfying \(R\).\\
\(D_c(Q;R)\) & Change in rule reliability after deleting set \(Q\).\\
\(\epsilon\) & Operational tolerance for the set-level training-reliability change in M1-P.\\
\(\texttt{rf\_tol}\) & Numerical tolerance for C1/C0 orientation ties.\\
\(\Leaves(N),\Lambda(N)\) & Descendant leaves and their set of predicted labels below \(N\).\\
\(\SIB(L)\) & Sibling-leaf-certified candidate links for leaf \(L\).\\
\(H_c(L)\) & Largest class-\(c\) label-homogeneous ancestor subtree containing \(L\).\\
\(I,J,Q\) & Ordered candidate list, accepted deletion set, and trial deletion set.\\
\(\CGR_{1,D},\CGR_{1,P}\) & Deterministic and class-probability outputs of the mismatch-guided mechanism.\\
\(\CGR_{2,D},\CGR_{2,P}\) & Deterministic and class-probability outputs of the sibling-certified mechanism.\\
\bottomrule
\end{tabularx}
\end{table}
\FloatBarrier

Each link \(e\) has an associated path condition \(C(e)\).
Let \(\Path(L)=(e_1,\ldots,e_r)\) denote the ordered root-to-leaf link sequence for leaf \(L\).
The generated rule is
\[
\GR(L)=\bigwedge_{j=1}^{r}C(e_j).
\]
For a deletion set \(Q\subseteq\{e_1,\ldots,e_r\}\), define
\[
R^{-Q}=\bigwedge_{e_j\notin Q}C(e_j).
\]
For a singleton, \(R^{-e}=R^{-\{e\}}\).
The sequence order is used when candidates are evaluated; \(Q\) and the accepted deletion set are sets of links.
Deleting a link means deleting its associated path condition; links and conditions remain distinct objects.
Each structural deletion variant produces exactly one output rule per input leaf and performs neither cross-rule deduplication nor redundant-rule elimination.

The training-sample rule region, support, and rule reliability are
\[
\mathcal D_{\mathcal S}(R)=\{i:x_i\models R\},\qquad
\supp_{\mathcal S}(R)=|\mathcal D_{\mathcal S}(R)|,
\]
\[
\hat p_{c,\mathcal S}(R)=
\frac{|\{i\in\mathcal D_{\mathcal S}(R):y_i=c\}|}
{\supp_{\mathcal S}(R)}.
\]
We call \(\hat p_{c,\mathcal S}(R)\) \emph{rule reliability} throughout and omit \(\mathcal S\) when the training sample is unambiguous.

\begin{definition}[Deterministic rule and class-probability rule settings]
In a deterministic rule setting, a generated rule is a hard implication \(R\Rightarrow y=c\), and \(Q\) consists of IRCs when \(R^{-Q}\Rightarrow y=c\) remains valid.
In a class-probability rule setting,
\[
R\Rightarrow\bigl(c,\hat p_c(R)\bigr),
\]
so preserving only the predicted label need not preserve the rule's empirical meaning.
\end{definition}

\subsection{C1/C0 link orientation}

\begin{definition}[C1 and C0 links]
\label{def:orientation}
For \(e=(N_p,N_c)\), let
\[
\Delta_1(e)=p_1(N_c)-p_1(N_p).
\]
The link is a C1 link if \(\Delta_1(e)>0\) and a C0 link if \(\Delta_1(e)<0\). Values with \(|\Delta_1(e)|\le\texttt{rf\_tol}\) are treated as neutral and do not supply mismatch candidates.
\end{definition}

\begin{lemma}[Class-ratio complementarity]
\label{lem:ratio}
Let parent \(N_p\) be split into two nonempty children \(N_l,N_r\).
If \(p_1(N_l)>p_1(N_p)\), then \(p_1(N_r)<p_1(N_p)\), equivalently \(p_0(N_r)>p_0(N_p)\).
The symmetric statement holds with \(l,r\) interchanged.
\end{lemma}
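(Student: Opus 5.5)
The plan is to write the parent proportion as a convex combination of the child proportions and read off the sign of the deviations. Since $N_l$ and $N_r$ partition the training cases at $N_p$, we have $n(N_p)=n(N_l)+n(N_r)$ and $n^1(N_p)=n^1(N_l)+n^1(N_r)$. Dividing by $n(N_p)$ and setting $w_l=n(N_l)/n(N_p)$ and $w_r=n(N_r)/n(N_p)$ gives
\[
p_1(N_p)=w_l\,p_1(N_l)+w_r\,p_1(N_r),\qquad w_l+w_r=1.
\]
Because both children are nonempty, $w_l,w_r>0$; this is the only place nonemptiness is used. It also guarantees that $p_1(N_l)$ and $p_1(N_r)$ are well defined.

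Next, subtract $p_1(N_p)=(w_l+w_r)\,p_1(N_p)$ from both sides to obtain the deviation identity
\[
w_l\bigl(p_1(N_l)-p_1(N_p)\bigr)+w_r\bigl(p_1(N_r)-p_1(N_p)\bigr)=0.
\]
If $p_1(N_l)>p_1(N_p)$, the first term is strictly positive. The second term must therefore be strictly negative, and since $w_r>0$ this forces $p_1(N_r)<p_1(N_p)$. For binary labels $p_0=1-p_1$ at every node, so $p_1(N_r)<p_1(N_p)$ is equivalent to $p_0(N_r)>p_0(N_p)$. The statement with $l$ and $r$ interchanged follows by the same argument, because the identity is symmetric in $l$ and $r$.

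The argument has no real obstacle. The one point needing care is the strict positivity of $w_r$. If $N_r$ were empty, the deviation identity would give no information about $N_r$, and $p_1(N_r)$ would not even be defined. This is why the hypothesis requires nonempty children, and the proof should state explicitly where that hypothesis enters.
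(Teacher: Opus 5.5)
Your proof is correct and follows the paper's argument essentially verbatim: both write $p_1(N_p)$ as the convex combination $w_l\,p_1(N_l)+w_r\,p_1(N_r)$ with strictly positive weights, rearrange to a deviation identity that forces $p_1(N_r)<p_1(N_p)$, and obtain the class-0 form from $p_0=1-p_1$. Your explicit remark on where nonemptiness enters (positivity of $w_r$ and well-definedness of $p_1(N_r)$) is a small but welcome addition.
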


\begin{proof}
Let \(w_l=n(N_l)/n(N_p)\) and \(w_r=n(N_r)/n(N_p)\).
Then \(w_l,w_r>0\), \(w_l+w_r=1\), and
\[
p_1(N_p)=w_lp_1(N_l)+w_rp_1(N_r).
\]
If \(p_1(N_l)>p_1(N_p)\), then
\[
w_r\{p_1(N_p)-p_1(N_r)\}=w_l\{p_1(N_l)-p_1(N_p)\}>0,
\]
so \(p_1(N_r)<p_1(N_p)\).
The class-0 claim follows from \(p_0=1-p_1\).
\end{proof}

Lemma~\ref{lem:ratio} justifies complementary C1/C0 link orientation for proportion-changing binary splits.
Figure~\ref{fig:c1c0} uses both line style and text labels, so its two orientations do not depend on color.

\begin{figure}[h!]
\centering
\begin{tikzpicture}[>=Stealth,node distance=1.55cm and 2.3cm,
nodebox/.style={rectangle,rounded corners=2pt,draw,minimum width=2.55cm,minimum height=0.9cm,font=\small,align=center}]
\node[nodebox,fill=gray!10] (p) {Parent \(N_p\)\\\(p_1=0.70\)};
\node[nodebox,fill=red!12,below left=of p] (l) {Child \(N_l\)\\\(p_1=0.40\)};
\node[nodebox,fill=blue!12,below right=of p] (r) {Child \(N_r\)\\\(p_1=0.90\)};
\draw[BrickRed,very thick,densely dashed,-{Stealth[length=2mm]}] (p)--node[left,font=\scriptsize,black] {C0 link}(l);
\draw[NavyBlue,very thick,solid,-{Stealth[length=2mm]}] (p)--node[right,font=\scriptsize,black] {C1 link}(r);
\node[below=0.18cm of l,font=\scriptsize] {class 0 increases};
\node[below=0.18cm of r,font=\scriptsize] {class 1 increases};
\end{tikzpicture}
\caption{C1/C0 link orientation. A proportion increase toward class 1 on one branch entails a movement toward class 0 on the sibling branch. Red and dashed denote C0; blue and solid denote C1. The labels preserve the distinction in grayscale, and the orientation makes no relevance claim for a descendant rule.}
\label{fig:c1c0}
\end{figure}

\begin{definition}[Matched and mismatched status]
\label{def:status}
Let \(L\) predict class \(c\).
A C1 or C0 link in \(\Path(L)\) is \emph{matched} when it is a C\(c\) link and \emph{mismatched} when it is a C\((1-c)\) link.
These are leaf-relative structural diagnostics, not relevance labels.
\end{definition}

\subsection{Entropy is not link-level relevance}

Define the scalar binary-entropy function
\[
h(t)=-t\log t-(1-t)\log(1-t)
\]
and node entropy \(\Ent(N)=h(p_1(N))\).
This notation separates the scalar function from a node quantity.

\begin{lemma}[Entropy asymmetry]
\label{lem:entropy}
Assume \(p=p_1(N_p)>1/2\).
If child \(N^+\) has \(q=p_1(N^+)>p\), then \(h(q)<h(p)\).
Let \(N^-\) be its sibling and \(s=p_0(N^-)\).
Then
\[
\Ent(N^-)<\Ent(N_p)\Longleftrightarrow s>p,
\]
with equality when \(s=p\), and
\[
\Ent(N^-)>\Ent(N_p)\Longleftrightarrow 1-p<s<p.
\]
\end{lemma}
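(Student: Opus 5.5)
The whole argument rests on two elementary properties of the scalar binary-entropy function \(h\) on \([0,1]\). First, \(h\) is symmetric about \(1/2\), that is, \(h(t)=h(1-t)\). Second, \(h\) is strictly increasing on \([0,1/2]\) and strictly decreasing on \([1/2,1]\). I will establish these from \(h'(t)=\log\frac{1-t}{t}\), which is positive for \(t<1/2\) and negative for \(t>1/2\); the endpoints are handled by continuity with the convention \(0\log 0=0\). The symmetry and monotonicity together give the following \emph{distance-to-the-middle} characterization:
\[
h(a)<h(b)\iff |a-\tfrac12|>|b-\tfrac12|,\qquad h(a)=h(b)\iff |a-\tfrac12|=|b-\tfrac12|.
\]
This holds because \(h(t)=g(|t-\tfrac12|)\) with \(g\) strictly decreasing on \([0,1/2]\).

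\emph{First claim.} Since \(q>p>1/2\), both values lie in the region where \(h\) is strictly decreasing, so \(h(q)<h(p)\) follows immediately.

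\emph{Sibling claims.} I will rewrite the sibling entropy in terms of \(s\). Because \(p_1(N^-)=1-s\), symmetry gives \(\Ent(N^-)=h(1-s)=h(s)\). Each comparison of \(\Ent(N^-)\) with \(\Ent(N_p)=h(p)\) then reduces, via the characterization above, to comparing \(|s-\tfrac12|\) with \(p-\tfrac12>0\).

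\begin{itemize}
\item \(h(s)<h(p)\) holds exactly when \(s>p\) or \(s<1-p\).
\item Equality holds exactly when \(s\in\{p,1-p\}\).
\item \(h(s)>h(p)\) holds exactly when \(1-p<s<p\).
\end{itemize}

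The stated equivalence \(\Ent(N^-)<\Ent(N_p)\iff s>p\) requires ruling out the branch \(s\le 1-p\). For this I will invoke Lemma~\ref{lem:ratio}. Since \(p_1(N^+)>p_1(N_p)\), the sibling satisfies \(p_1(N^-)<p\), i.e.\ \(s=p_0(N^-)>1-p\). Both children are nonempty: \(N^+\) is nonempty because \(q\) is defined, and I will assume \(N^-\) is nonempty as in the split setting. With \(s>1-p\) established, the branches \(s<1-p\) and \(s=1-p\) are excluded, and the three cases reduce to \(s>p\), \(s=p\), and \(1-p<s<p\). These match the statement exactly.

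The only delicate step is this use of Lemma~\ref{lem:ratio} to exclude \(s\le 1-p\), together with being explicit that both children are nonempty so that the lemma applies. Everything else is routine calculus on \(h\).
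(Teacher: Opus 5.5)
Your proof is correct and follows the paper's route: the symmetry and strict monotonicity of \(h\) reduce every comparison to the distances of \(s\) and \(p\) (equivalently \(1-p\)) from \(1/2\). The one difference is that you explicitly invoke Lemma~\ref{lem:ratio} to obtain \(s>1-p\), which the paper's proof leaves implicit. That step is genuinely needed for the forward direction \(\Ent(N^-)<\Ent(N_p)\Rightarrow s>p\), and the nonemptiness of \(N^-\) that the lemma requires is automatic, since \(s=p_0(N^-)\) is defined.
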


\begin{proof}
The function \(h\) is symmetric and strictly decreasing on \((1/2,1)\), so \(q>p>1/2\) gives \(h(q)<h(p)\).
For the sibling, \(h(p_1(N^-))=h(s)\), whereas \(\Ent(N_p)=h(p)=h(1-p)\).
The cases follow by comparing the distances of \(s\) and \(1-p\) from \(1/2\).
\end{proof}

Table~\ref{tab:entropy} shows the scientific distinction: complementary C1/C0 orientation is fixed by the proportion change, whereas the sibling entropy can be either higher or lower than the parent's.

\begin{table}[h!]
\centering
\caption{Entropy asymmetry for parent counts \((70,30)\), with a C1 child whose class-1 proportion is 0.90.}
\label{tab:entropy}
\small
\begin{tabular*}{\textwidth}{@{\extracolsep{\fill}}cccc@{}}
\toprule
C1-child counts & C0-sibling counts & Sibling \(p_0\) & Sibling entropy vs.\ parent\\
\midrule
\((9,1)\) & \((61,29)\) & 0.322 & higher\\
\((36,4)\) & \((34,26)\) & 0.433 & higher\\
\((54,6)\) & \((16,24)\) & 0.600 & higher\\
\((63,7)\) & \((7,23)\) & 0.767 & lower\\
\bottomrule
\end{tabular*}
\end{table}

Entropy explains split selection but does not define link-level relevance; the path-level deletion effect on a complete generated rule defines condition relevance.

\section{Relevance-Aware Structural IRC Deletion}
\label{sec:method}

Relevance-aware structural IRC deletion separates structural candidate generation from setting-specific deletion acceptance. Strict mismatch provides a broader diagnostic candidate source, whereas same-class sibling topology provides a narrower exact-preservation certificate independent of link orientation. The implementation names the mismatch-guided branch Method~1 and the sibling-certified branch Method~2; the D/P suffix identifies the deterministic or class-probability setting. Table~\ref{tab:method-map} summarizes the candidate sources, acceptance rules, and preservation targets.

\begin{table}[h!]
\centering
\caption{Structural evidence, setting-specific acceptance, and preservation targets.}
\label{tab:method-map}
\footnotesize
\begin{tabularx}{\textwidth}{@{}p{0.11\textwidth}p{0.25\textwidth}p{0.28\textwidth}L@{}}
\toprule
Variant & Candidate source & Acceptance & Guarantee\\
\midrule
M1-D & Strict-mismatch structural candidates & Complete set passes a joint deterministic certificate & Exact hard implication on the certified deterministic domain.\\
M1-P & IRC-presumed strict mismatches outside \(H_c(L)\) & \textsc{ReliabilitySelect} & Final training-rule reliability differs from its original value by at most \(\epsilon\).\\
M2-D & \(\SIB(L)\) only & Delete the complete certified set & Exact source-tree predictions, full coverage, and zero opposite-class rule conflict.\\
M2-P & \(\SIB(L)\) only & Delete the complete certified set & Exact source-tree predictions and accuracy, full coverage, and zero opposite-class rule conflict.\\
\bottomrule
\end{tabularx}
\end{table}

\subsection{Deterministic structural certificates}

In the deterministic rule setting, an accepted deletion must preserve the hard class implication. C1/C0 mismatch is a candidate signal rather than a certificate, so the complete proposed deletion set is evaluated jointly. Same-class sibling topology instead supplies exact local evidence independently of link orientation.

When a finite deterministic domain \(\mathcal X_0\) is explicitly listed, the joint M1-D certificate can be evaluated directly: a deletion set \(Q\) is accepted only if every \(x\in\mathcal X_0\) satisfying \(R^{-Q}\) has class \(c\). This establishes the weakened implication on \(\mathcal X_0\), without extending it to unlisted attribute combinations. When a class-oriented order is available, the following theorem supplies an analytic sufficient certificate on the stated feasible domain.

\begin{definition}[Class-oriented monotonicity]
For class \(c\), let \(\preceq_c\) be a partial order on the feasible domain \(\mathcal X\). A deterministic response \(f\) is class-\(c\) monotone when \(x'\preceq_c x\) and \(f(x')=c\) imply \(f(x)=c\).
\end{definition}

\begin{theorem}[Joint monotone weakening]
\label{thm:joint-monotone}
Let a valid class-\(c\) rule be
\[
R=A\wedge B_Q\Rightarrow f(x)=c,
\]
where \(B_Q\) is the conjunction of all conditions selected for simultaneous deletion. Suppose \(f\) is class-\(c\) monotone and, for every feasible \(x\models A\), there exists a feasible witness \(x'\models A\wedge B_Q\) such that \(x'\preceq_c x\). Then the jointly weakened rule \(A\Rightarrow f(x)=c\) is valid.
\end{theorem}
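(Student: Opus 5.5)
The argument is a direct pointwise one. We fix an arbitrary feasible \(x\) with \(x\models A\) and show \(f(x)=c\); since \(x\) is arbitrary, this is exactly the validity of the weakened rule \(A\Rightarrow f(x)=c\) on the feasible domain \(\mathcal X\). No structural results from earlier sections (Lemma~\ref{lem:ratio} or Lemma~\ref{lem:entropy}) are needed. The theorem is purely order-theoretic, and the C1/C0 mismatch only serves upstream to propose which conditions go into \(B_Q\).

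The steps are as follows.
\begin{enumerate}[label=(\roman*)]
\item Invoke the witness hypothesis for the fixed \(x\). It produces a feasible \(x'\) with \(x'\models A\wedge B_Q\) and \(x'\preceq_c x\).
\item Since \(x'\) satisfies the full antecedent \(R=A\wedge B_Q\) and is feasible, validity of the original rule \(R\Rightarrow f(x)=c\) gives \(f(x')=c\).
\item Apply class-\(c\) monotonicity of \(f\) to the pair \(x'\preceq_c x\) with \(f(x')=c\). This yields \(f(x)=c\).
\end{enumerate}
This completes the argument. It is worth remarking that the deletion is joint: the witness is required to satisfy all of \(B_Q\) simultaneously. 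This is why the certificate applies to the whole set \(Q\) rather than to its conditions one at a time. Singleton certificates for each condition in \(Q\) would not compose in general, because separate witnesses need not combine into one.

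The only step needing care is the bookkeeping of feasibility. The original rule's validity and the monotonicity property are both asserted only on the feasible domain \(\mathcal X\), so it must be checked that the witness \(x'\) lies in \(\mathcal X\). The hypothesis grants this explicitly, so nothing beyond this check is required. I do not expect a genuine obstacle; the substance of the theorem lies in verifying the witness hypothesis for concrete trees, not in the implication itself.
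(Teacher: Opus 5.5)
Your proof is correct and matches the paper's argument: take the feasible witness $x'\models A\wedge B_Q$, use the original rule to get $f(x')=c$, then use class-$c$ monotonicity along $x'\preceq_c x$ to conclude $f(x)=c$. Your remark that the certificate applies to the whole set $Q$ jointly, rather than to its members one at a time, also matches the paper's closing observation.
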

\begin{proof}
For any feasible \(x\models A\), the premise supplies \(x'\models A\wedge B_Q\). The original rule gives \(f(x')=c\). Because \(x'\preceq_c x\), class-oriented monotonicity gives \(f(x)=c\). The argument applies to the complete deletion set \(Q\), not to its members one at a time.
\end{proof}

\begin{corollary}[Product Boolean domain]
\label{cor:boolean}
On \(\{0,1\}^d\), suppose \(f\) is monotone in known class-favorable coordinate directions. If every condition in \(B_Q\) restricts a coordinate in the class-unfavorable direction and, for every \(x\models A\), the retained conjunction permits all deleted coordinates to be replaced simultaneously by values satisfying \(B_Q\), then \(Q\) can be deleted jointly.
\end{corollary}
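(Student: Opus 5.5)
The plan is to reduce the corollary to Theorem~\ref{thm:joint-monotone}. To do that I will build an explicit partial order \(\preceq_c\) on \(\{0,1\}^d\) and an explicit witness map. The feasible domain is all of \(\{0,1\}^d\) here. For each coordinate \(k\), let \(\sigma_k\in\{0,1\}\) be the known class-favorable value, so that \(f\) is monotone toward class \(c\) when coordinate \(k\) moves toward \(\sigma_k\). I then define the coordinatewise order by \(x'\preceq_c x\) iff, for every \(k\), either \(x'_k=x_k\) or \(x_k=\sigma_k\). In words, \(x\) is obtained from \(x'\) by moving some coordinates into their favorable values.

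The first step is to check that this is a partial order:
\begin{itemize}
\item reflexivity is immediate;
\item for antisymmetry, if \(x'_k\neq x_k\) then \(x_k=\sigma_k\) and \(x'_k=\sigma_k\), which is a contradiction;
\item transitivity holds coordinatewise.
\end{itemize}
Next I check that the hypothesis ``\(f\) is monotone in known class-favorable coordinate directions'' is exactly class-\(c\) monotonicity for \(\preceq_c\). I would state this as the intended reading of the hypothesis. If needed, I would derive it by chaining single-coordinate flips, since any \(x'\preceq_c x\) is reached by finitely many flips toward \(\sigma_k\).

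The second step constructs the witness. Fix \(x\models A\), and let \(K\) be the set of coordinates restricted by conditions in \(B_Q\). Each such condition has the form \(x_k=1-\sigma_k\), since it restricts in the class-unfavorable direction. Define \(x'\) by setting \(x'_k=1-\sigma_k\) for \(k\in K\) and \(x'_k=x_k\) otherwise.
\begin{itemize}
\item \(x'\models B_Q\) by construction.
\item \(x'\models A\) by the hypothesis that the retained conjunction permits simultaneous replacement of all deleted coordinates by values satisfying \(B_Q\).
\item \(x'\preceq_c x\), because each changed coordinate has \(x'_k=1-\sigma_k\neq x_k\), which forces \(x_k=\sigma_k\), and every other coordinate is equal.
\end{itemize}
Theorem~\ref{thm:joint-monotone} then yields validity of \(A\Rightarrow f(x)=c\), that is, joint deletability of \(Q\).

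The main obstacle is interpretive rather than computational. I must make precise that ``restricts in the class-unfavorable direction'' means each deleted literal pins a coordinate to \(1-\sigma_k\). I must also handle two conditions in \(B_Q\) that touch the same coordinate: they either agree, or \(B_Q\) is unsatisfiable, in which case the replacement hypothesis fails and the statement is vacuous. Finally, I need to confirm that the permissibility hypothesis is used for the whole set \(K\) at once. This is the point that makes the deletion joint rather than one condition at a time.
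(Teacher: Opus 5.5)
Your proposal is correct and follows the paper's route. The paper's proof also replaces all coordinates constrained by \(B_Q\) simultaneously while retaining \(A\), and uses that point as the witness required by Theorem~\ref{thm:joint-monotone}; you additionally make explicit what the paper leaves implicit, namely the coordinatewise order \(\preceq_c\) oriented by the favorable values \(\sigma_k\) and the check that the replaced point satisfies \(x'\preceq_c x\).
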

\begin{proof}
Replace all coordinates constrained by \(B_Q\) simultaneously while retaining \(A\). The assumed feasibility supplies the witness required by Theorem~\ref{thm:joint-monotone}.
\end{proof}

Repeated predicates on the same feature are intersected before the feasibility test. Failure of the complete-set certificate preserves the original rule. Individually plausible conditions are not accumulated as though singleton certificates implied joint soundness.

Formally, \(e=(N_p,N_b)\in\SIB(L)\) when \(e\in\Path(L)\), \(N_b\) is the internal child of \(N_p\) containing \(L\), and the other child of \(N_p\) is a leaf with predicted class \(\Ree(L)\).

\begin{proposition}[Sibling-leaf IRC]
\label{prop:sibling}
Let internal node \(N_p\) have a leaf child \(N_a\) predicted as class \(c\) and an internal child \(N_b\). For every descendant leaf \(L\in\Leaves(N_b)\) also predicted as class \(c\), the link from \(N_p\) to \(N_b\) is an IRC with respect to the fitted classifier \(T\).
\end{proposition}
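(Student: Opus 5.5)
The plan is to unfold what "IRC with respect to the fitted classifier \(T\)" means in the deterministic sense fixed by the Definition of rule settings. Take \(R=\GR(L)\) and \(e=(N_p,N_b)\). We must show that the weakened rule \(R^{-e}\) still implies \(T(x)=c\) for every \(x\) satisfying it. The argument is a case split on which side of the split at \(N_p\) the point \(x\) falls. It uses only the partition property of the fitted tree: every \(x\) follows exactly one root-to-leaf path.

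\textbf{Step 1: split \(R^{-e}\) at \(N_p\).} Write \(\Path(L)=(e_1,\ldots,e_r)\) and let \(e=e_k\). Then
\[
R^{-e}=A_{\mathrm{up}}\wedge A_{\mathrm{down}},
\]
where
\begin{itemize}
\item \(A_{\mathrm{up}}=\bigwedge_{j<k}C(e_j)\) is the rule reaching \(N_p\);
\item \(A_{\mathrm{down}}=\bigwedge_{j>k}C(e_j)\) collects the conditions strictly below \(N_b\).
\end{itemize}
Any \(x\models R^{-e}\) reaches \(N_p\) in \(T\). Since the split at \(N_p\) is binary and exhaustive, either \(x\models C(e)\) or \(x\models C(e')\), where \(e'=(N_p,N_a)\).

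\textbf{Step 2: the two cases.}
\begin{itemize}
\item If \(x\models C(e)\), then \(x\models R\), so \(x\) is routed to \(L\) and \(T(x)=\Ree(L)=c\).
\item If \(x\models C(e')\), then \(x\) is routed to the leaf \(N_a\), and \(T(x)=\Ree(N_a)=c\) by hypothesis. The conditions in \(A_{\mathrm{down}}\) are simply irrelevant to this routing.
\end{itemize}
In both cases \(T(x)=c\), so \(R^{-e}\Rightarrow T(x)=c\) is valid on the whole feature space. The IRC is therefore certified with respect to \(T\) itself, not only on the training sample.

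\textbf{Step 3: consequences for Table~\ref{tab:method-map}.} The region of \(R^{-e}\) is contained in \(\mathcal D(\GR(L))\cup\mathcal D(\GR(N_a))\), and both pieces are class-\(c\) leaf regions of \(T\). Hence the weakened rule overlaps no opposite-class leaf, and its prediction agrees with \(T\) wherever it fires. Every original leaf rule is retained in weakened form, so coverage is full.

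\textbf{Main obstacle: deleting several \(\SIB(L)\) links at once.} The proposition concerns a single link, but M2 deletes the whole set \(\SIB(L)\) jointly. I would handle this by induction on the number of deleted links, ordered from the root down. Suppose \(x\) satisfies the rule with a set of sibling-certified links removed. Follow \(x\)'s actual path in \(T\) from the root. At each deleted link, \(x\) either continues toward \(L\) or exits into a class-\(c\) leaf sibling, which ends the path with prediction \(c\). At each retained link, \(x\) satisfies the condition and so continues along \(\Path(L)\). Thus \(x\) either ends at \(L\) or at one of these class-\(c\) sibling leaves, and \(T(x)=c\) in every case. This is the only part needing care; the single-link statement itself is immediate from the partition property.
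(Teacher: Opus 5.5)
Your proof is correct and follows the paper's argument: a point satisfying the weakened rule reaches \(N_p\), then either continues through \(N_b\) to \(L\) (so it satisfies the full rule) or exits to the class-\(c\) leaf \(N_a\), so the class-\(c\) implication holds with respect to \(T\). Your extra remarks on coverage, conflict, and joint deletion match the paper's Corollary~\ref{cor:m2-rule-set} and the rootmost-divergence proof of Theorem~\ref{thm:joint-sibling}; one notational point is that the paper's \(\mathcal D_{\mathcal S}\) denotes training-sample indices, so the containment in your Step~3 should be read as a statement about feature-space regions.
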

\begin{proof}
Write the descendant rule as \(P_L\wedge C(N_p,N_b)\wedge Q_L\Rightarrow c\). After deleting the link condition, a point satisfying the retained conjunction either continues through \(N_b\) to \(L\), which predicts \(c\), or takes the opposite branch to \(N_a\), which also predicts \(c\). Thus the class implication is preserved with respect to \(T\).
\end{proof}

Figure~\ref{fig:sibling} illustrates the same-class sibling certificate for a class-1 descendant.

\begin{figure}[h!]
\centering
\begin{tikzpicture}[>=Stealth,node distance=1.45cm and 2.15cm,
nodebox/.style={rectangle,rounded corners=2pt,draw,minimum width=2.25cm,minimum height=0.82cm,font=\small,align=center}]
\node[nodebox,fill=gray!10] (p) {\(N_p\)};
\node[nodebox,fill=blue!15,below left=of p] (a) {sibling leaf\\class 1};
\node[nodebox,fill=gray!8,below right=of p] (b) {\(N_b\)\\internal child};
\node[nodebox,fill=blue!12,below left=of b] (d1) {\(L_1\) leaf\\class 1};
\node[nodebox,fill=red!10,below right=of b] (d2) {\(L_2\) leaf\\class 0};
\draw[very thick] (p)--node[left,font=\scriptsize] {same-class leaf}(a);
\draw[very thick,densely dashed] (p)--node[right,font=\scriptsize] {candidate}(b);
\draw[very thick] (b)--(d1);
\draw[very thick] (b)--(d2);
\end{tikzpicture}
\caption{Sibling-leaf certificate for class 1. The dashed link is removable for the same-class descendant \(L_1\), independently of its C1/C0 orientation.}
\label{fig:sibling}
\end{figure}

\begin{theorem}[Joint sibling deletion by rootmost divergence]
\label{thm:joint-sibling}
For a leaf \(L\) predicted as class \(c\), let \(Q\subseteq\Path(L)\) contain links whose opposite child is a leaf predicted as \(c\). Deleting every link in \(Q\) simultaneously preserves the class-\(c\) implication of the rule with respect to \(T\).
\end{theorem}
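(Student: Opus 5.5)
Fix an arbitrary point \(x\) satisfying the weakened rule \(\GR(L)^{-Q}\) and trace \(x\) through the fitted tree \(T\) from the root, in the spirit of the "rootmost divergence" named in the statement. Write \(\Path(L)=(e_1,\ldots,e_r)\) with \(e_j=(N_{j-1},N_j)\), \(N_0\) the root and \(N_r=L\). Every point is routed by \(T\) to exactly one leaf. Either \(x\) follows the entire path to \(L\), or there is a first index \(j\) at which \(x\) leaves it, that is, \(x\) reaches \(N_{j-1}\) but does not satisfy \(C(e_j)\). I will show that in both cases \(T(x)=c\). This gives the class-\(c\) implication of \(R^{-Q}\) with respect to \(T\) for all deletions in \(Q\) at once. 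It does not require any sequence of singleton deletions, which would be the wrong thing to accumulate, as the remark after Corollary~\ref{cor:boolean} warns.

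The case analysis runs as follows. If \(x\) follows the whole path, then \(x\) reaches \(L\) and \(T(x)=\Ree(L)=c\). Otherwise let \(j\) be the rootmost divergence index. Because \(x\) reaches \(N_{j-1}\), it satisfies \(C(e_1),\ldots,C(e_{j-1})\). The key observation is that \(e_j\) must lie in \(Q\). If \(e_j\notin Q\), then \(C(e_j)\) is a retained conjunct of \(R^{-Q}\), so \(x\) would satisfy it and could not diverge at \(N_{j-1}\). Hence \(e_j\in Q\), and by hypothesis the opposite child of \(N_{j-1}\) is a leaf predicted as \(c\). The split at \(N_{j-1}\) is binary, so failing \(C(e_j)\) sends \(x\) to that opposite child, and therefore \(T(x)=c\).

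The main point needing care is the claim that failing \(C(e_j)\) at \(N_{j-1}\) sends \(x\) exactly to the sibling child. This uses that each internal node has two children with complementary conditions, so \(\neg C(e_j)\) is the condition of the other link. I would state this explicitly as the binary-tree convention for \(T\).

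I would also handle repeated predicates on the same feature, for example a deleted threshold \(x_k\le 5\) above a retained \(x_k\le 3\). These cause no difficulty, because the argument never uses satisfiability of intermediate conditions: it uses only which conjuncts are retained and the first point of departure. Finally, I would note that the argument is pointwise over all inputs and independent of \(\mathcal S\). It therefore also yields exact agreement of \(T\) on the region of \(R^{-Q}\) with the constant prediction \(c\), which is the preservation property claimed for M2 in Table~\ref{tab:method-map}.
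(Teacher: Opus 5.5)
Your proof is correct and follows essentially the same rootmost-divergence argument as the paper: a point satisfying the retained conditions either reaches \(L\), or it first leaves the source path at some link whose opposite child is a class-\(c\) leaf. You also make explicit a step the paper leaves implicit, namely that this first divergence must occur at a link in \(Q\), because every retained condition is satisfied.
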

\begin{proof}
Take a point satisfying all retained conditions. If it follows the source path at every deleted link, it reaches \(L\). Otherwise, consider the deleted link closest to the root at which it leaves the source path. The opposite branch terminates immediately at a class-\(c\) sibling leaf. Thus every admitted point is classified as \(c\), regardless of lower deleted links.
\end{proof}

\begin{corollary}[Fitted-tree rule-set preservation]
\label{cor:m2-rule-set}
Apply Theorem~\ref{thm:joint-sibling} to every leaf rule while retaining one output rule per source leaf. For every input \(x\), at least one output rule covers \(x\), and every covering rule predicts \(T(x)\). Hence the output rules preserve fitted-tree predictions and accuracy under any conflict resolver, retain full coverage, and have zero opposite-class rule conflict.
\end{corollary}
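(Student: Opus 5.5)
The argument rests on two facts: the source tree partitions the input space, and each output rule's region contains its source leaf's region. We then feed both into Theorem~\ref{thm:joint-sibling}. Write \(\mathcal R(L)\) for the region of \(\GR(L)\), and let \(Q_L\subseteq\Path(L)\) be the deletion set certified for leaf \(L\) (its sibling-certified links, i.e.\ those in \(\SIB(L)\), or any subset of links whose opposite child is a class-\(\Ree(L)\) leaf). The output rule is \(\GR(L)^{-Q_L}\).

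\emph{Coverage.} Because \(T\) is a binary tree whose splits are complementary conditions, the regions \(\{\mathcal R(L):L\in\Leaves(T)\}\) partition the input space. Fix an input \(x\). There is exactly one leaf \(L_x\) with \(x\models\GR(L_x)\), and \(T(x)=\Ree(L_x)\). Deleting conjuncts only weakens a conjunction, so \(x\models\GR(L_x)^{-Q_{L_x}}\). Hence at least one output rule covers \(x\).

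\emph{Consistency of covering rules.} Let \(L\) be any leaf whose output rule covers \(x\), and let \(c=\Ree(L)\). By Theorem~\ref{thm:joint-sibling}, applied to \(L\) and \(Q_L\), every point satisfying \(\GR(L)^{-Q_L}\) is classified as \(c\) by \(T\). In particular \(T(x)=c\), so every covering rule predicts \(T(x)\). This step carries the real content, and it is already settled by the rootmost-divergence argument: a point that leaves the source path first at a deleted link lands immediately in a class-\(c\) sibling leaf. The one thing to check is that this argument needs no assumption about \(x\) lying in \(\mathcal R(L)\). Only the retained conditions are used, and they force \(x\) to follow the source path at every retained link up to its first departure. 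I expect this verification to be the main (if mild) point of care.

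\emph{Consequences.} All rules covering \(x\) carry the same label \(T(x)\), so any conflict resolver that selects among covering rules returns \(T(x)\). The rule-set classifier therefore agrees with \(T\) pointwise, and its accuracy on any sample equals that of \(T\). Coverage is full by the first step. No two rules with opposite classes can both cover a point, since both would have to equal \(T(x)\); hence opposite-class conflict is zero. We retain one output rule per source leaf and perform no deduplication, which does not affect any of these claims.
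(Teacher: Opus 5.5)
Your proposal is correct and follows the paper's own route. Coverage holds because the weakened rule of the source leaf of \(x\) still covers \(x\), and all covering rules agree because Theorem~\ref{thm:joint-sibling} is applied to each covering output rule; your extra checks (leaf regions partition the input space, and the rootmost-divergence argument uses only the retained conditions, so it applies to points outside the original leaf region) make explicit what the paper's two-line proof leaves implicit.
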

\begin{proof}
The generalized rule originating from the source leaf of \(x\) still covers \(x\), establishing coverage. Theorem~\ref{thm:joint-sibling} implies that any other generalized rule covering \(x\) has consequent \(T(x)\), so all covering rules agree.
\end{proof}

The sibling certificate is independent of C1/C0 orientation. Consequently, \(\SIB(L)\) may include a matched link or a link inside \(H_c(L)\); neither property changes sibling-certified candidacy.
\FloatBarrier

\subsection{Class-probability relevance}
\label{sec:class-probability-relevance}

For a class-\(c\) generated rule \(R\) and a deletion set \(Q\), define
\[
D_c(Q;R)=\hat p_c(R^{-Q})-\hat p_c(R).
\]
The single-condition effect is \(D_c(\{e\};R)\), abbreviated \(D_c(e;R)\). The set-level definition covers both joint and cumulative greedy trials.

For tolerance \(\epsilon\ge0\), a deletion set is reliability-decreasing when \(D_c(Q;R)<-\epsilon\), reliability-increasing when \(D_c(Q;R)>\epsilon\), and reliability-preserving when \(|D_c(Q;R)|\le\epsilon\).

\begin{definition}[Path-level relevance effects]
\label{def:path-relevance}
For a single condition \(e\) in a class-\(c\) rule \(R\):
\begin{enumerate}[label=(\roman*),leftmargin=2.0em,itemsep=0.15em,topsep=0.2em]
\item A \emph{positive-relevance effect} occurs when \(D_c(e;R)<-\epsilon\); deleting \(e\) lowers class-\(c\) rule reliability beyond tolerance.
\item A \emph{negative-relevance effect} occurs when \(D_c(e;R)>\epsilon\); deleting \(e\) raises class-\(c\) rule reliability beyond tolerance, although \(e\) may encode a lower-reliability same-label refinement.
\item A \emph{probabilistic IRC effect} occurs when \(|D_c(e;R)|\le\epsilon\); deleting \(e\) preserves class-\(c\) rule reliability within tolerance.
\end{enumerate}
The three effects are relative to \(R\), class \(c\), the training sample, and \(\epsilon\); the effects are not intrinsic labels of individual links. When \(|Q|>1\), only the set-level terms are used because condition effects may interact or offset one another.
\end{definition}

Table~\ref{tab:effects} summarizes the set-level terminology and the corresponding singleton interpretation.

\begin{table}[h!]
\centering
\caption{Set-level deletion effects and their singleton relevance interpretations.}
\label{tab:effects}
\footnotesize
\begin{tabularx}{\textwidth}{@{}>{\raggedright\arraybackslash}p{0.25\textwidth}>{\centering\arraybackslash}p{0.24\textwidth}L@{}}
\toprule
Effect & Definition & Interpretation\\
\midrule
Reliability-decreasing deletion & \(D_c(Q;R)<-\epsilon\) & Joint deletion lowers class-\(c\) training-rule reliability beyond tolerance; for \(Q=\{e\}\), deleting the condition produces a positive-relevance effect.\\
Reliability-increasing deletion & \(D_c(Q;R)>\epsilon\) & Joint deletion raises class-\(c\) training-rule reliability beyond tolerance; for \(Q=\{e\}\), deleting the condition produces a negative-relevance effect.\\
Reliability-preserving deletion & \(|D_c(Q;R)|\le\epsilon\) & The complete deletion set preserves net training-rule reliability within tolerance; for \(Q=\{e\}\), deleting the condition produces a probabilistic IRC effect.\\
\bottomrule
\end{tabularx}
\end{table}

For node \(N\), define
\[
\Lambda(N)=\{\Ree(L'):L'\in\Leaves(N)\}.
\]
For a leaf \(L\) with \(c=\Ree(L)\), let \(S\) be the highest ancestor of \(L\) such that \(\Lambda(S)=\{c\}\). The subtree rooted at \(S\) is denoted \(H_c(L)\) and is the largest \(c\)-label-homogeneous ancestor subtree containing \(L\). If no nonleaf ancestor satisfies the condition, set \(H_c(L)=L\). A link lies inside \(H_c(L)\) when both its parent and child belong to the rooted subtree; otherwise the link lies outside.

The mismatch-guided mechanism preserves matched links by construction; matched status alone does not establish positive relevance. Mismatched links inside \(H_c(L)\) are treated as negative-relevance-presumed and protected because the links may encode same-label reliability refinement. Mismatched links outside \(H_c(L)\) are treated as IRC-presumed structural candidates and become accepted deletions only after the two-sided reliability criterion is satisfied. The three presumptions govern mismatch-guided deletion only; sibling-certified candidacy is independent of C1/C0 orientation and does not use \(H_c(L)\).

\begin{definition}[Relevance-aware rule]
\label{def:relevance-aware-rule}
Within this study, a \emph{relevance-aware rule} is a path rule simplified under explicit setting-specific deletion semantics rather than a globally shortest antecedent set. The mismatch-guided class-probability output retains matched links by construction, protects negative-relevance-presumed mismatches inside \(H_c(L)\), and deletes IRC-presumed candidates outside \(H_c(L)\) only after the two-sided reliability criterion is satisfied. Within the broader relevance-aware framework, a sibling-certified output is identified specifically as an exact source-tree-preserving compact rule because the sibling certificate is independent of \(D_c(Q;R)\).
\end{definition}

\paragraph{Ancestor-rule assimilation.}

Ancestor-rule assimilation occurs when condition deletion reduces a descendant generated rule to a strict root-to-ancestor prefix and thereby expands the rule region:
\[
\mathcal D_{\mathcal S}(R_L)
\subsetneq
\mathcal D_{\mathcal S}(R_A).
\]
The operational diagnostic records a strict-prefix output only when the retained predicates equal a proper root-to-ancestor prefix and the corresponding training region expands strictly. A zero or low strict-prefix rate therefore means that few resulting rules satisfy this exact structural criterion; it does not exclude other forms of overlap with ancestor or sibling regions.

Figure~\ref{fig:assimilation} illustrates the distinction between the descendant rule, the retained ancestor prefix, and the additional sibling-region support.

\begin{figure}[h!]
\centering
\begin{tikzpicture}[>=Stealth,node distance=1.35cm and 1.75cm,
nodebox/.style={rectangle,rounded corners=2pt,draw,minimum width=2.25cm,minimum height=0.82cm,font=\small,align=center}]
\node[nodebox,fill=gray!8] (root) {root-side\\retained prefix};
\node[nodebox,fill=blue!12,below=of root] (m) {\(N_m\)\\\(p_1=0.90\)};
\node[nodebox,fill=blue!7,below left=of m] (r) {\(N_r\)\\\(p_1=0.70\)};
\node[nodebox,fill=blue!20,below right=of m] (n3) {\(N_3\) leaf\\\(p_1=0.95\)};
\node[nodebox,fill=red!12,below left=of r] (n1) {\(N_1\) leaf\\\(p_1=0.40\)};
\node[nodebox,fill=blue!14,below right=of r] (n2) {\(N_2\) leaf\\\(p_1=0.80\)};
\draw[ForestGreen,very thick,double] (root)--node[right,font=\scriptsize,black] {strict prefix path}(m);
\draw[BrickRed,very thick,densely dashed] (m)--node[left,font=\scriptsize,black] {C0}(r);
\draw[NavyBlue,very thick] (m)--node[right,font=\scriptsize,black] {C1}(n3);
\draw[BrickRed,very thick,densely dashed] (r)--(n1);
\draw[NavyBlue,very thick] (r)--(n2);
\node[rectangle,draw,dashed,fit=(m)(r)(n3)(n1)(n2),inner sep=5pt,label={[font=\scriptsize]below:conceptual ancestor region}] {};
\end{tikzpicture}
\caption{Ancestor-rule assimilation. Deleting descendant conditions reduces the \(N_2\) rule to a strict prefix ending at \(N_m\), thereby expanding coverage into the sibling region represented by \(N_3\). The operational diagnostic records only this exact root-to-ancestor collapse.}
\label{fig:assimilation}
\end{figure}

The two-sided criterion controls this set-level reliability change for M1-P. It is not an acceptance condition for M2-P.

\subsection{Executable procedure}

Tree-level annotations and candidate structures are computed once and reused across leaf rules. The mismatch-guided branch uses C1/C0 orientation and, in class-probability mode, \textsc{ReliabilitySelect}. The sibling-certified branch uses only the sibling sets \(\SIB(L)\). Appendix~\ref{app:shared} summarizes the shared preprocessing steps.

\subsubsection{Mismatch-Guided Relevance-Aware Deletion (Method~1)}
\label{sec:method1}

Algorithms~\ref{alg:m1d} and~\ref{alg:m1p} specify the deterministic and class-probability variants of mismatch-guided deletion. M1-D submits the complete strict-mismatch set to either direct finite-domain verification or the joint certificate in Theorem~\ref{thm:joint-monotone}. M1-P treats mismatches inside \(H_c(L)\) as negative-relevance-presumed and protects them; Algorithm~\ref{alg:reliability-select} evaluates the IRC-presumed candidates outside \(H_c(L)\).

\begin{algorithm}[h!]
\caption{M1-D: relevance-aware deterministic deletion from strict mismatches}
\label{alg:m1d}
\begin{algorithmic}[1]
\Require Fitted tree \(T\), deterministic response \(f\), certified domain and joint-certificate routine
\Ensure One rule \(\CGR_{1,D}(L)\) per source leaf
\State Annotate C1/C0 orientation from class-proportion changes.
\For{each leaf \(L\) in source order}
  \State \(R\leftarrow\GR(L)\); let \(I\) be all strict-mismatch structural candidates on \(\Path(L)\), leaf to root.
  \If{the complete set \(I\) satisfies the joint hard-implication certificate}
    \State \(J\leftarrow I\).
  \Else
    \State \(J\leftarrow\emptyset\).
  \EndIf
  \State Output \(\CGR_{1,D}(L)\leftarrow R^{-J}\).
\EndFor
\end{algorithmic}
\end{algorithm}

\begin{algorithm}[h!]
\caption{M1-P: relevance-aware class-probability deletion with two-sided reliability control}
\label{alg:m1p}
\begin{algorithmic}[1]
\Require Fitted tree \(T\), training sample \(\mathcal S\), tolerance \(\epsilon\)
\Ensure One rule \(\CGR_{1,P}(L)\) per source leaf
\State Annotate links and compute \(\Lambda(N)\) once.
\For{each leaf \(L\) in source order}
  \State \(R\leftarrow\GR(L)\), \(c\leftarrow\Ree(L)\); compute \(H_c(L)\).
  \State \(I\leftarrow\) IRC-presumed strict mismatches outside \(H_c(L)\), ordered leaf to root.
  \State \(J\leftarrow\Call{ReliabilitySelect}{R,c,I,\mathcal S,\epsilon}\).
  \State Output \(\CGR_{1,P}(L)\leftarrow R^{-J}\).
\EndFor
\end{algorithmic}
\end{algorithm}

\subsubsection{Sibling-Leaf-Certified Exact-Preservation Deletion (Method~2)}
\label{sec:method2}

Algorithms~\ref{alg:m2d} and~\ref{alg:m2p} specify the deterministic and class-probability variants of sibling-certified deletion. Both variants delete the complete set \(\SIB(L)\) certified by Theorem~\ref{thm:joint-sibling}. The sample-independent structural operation is identical in both settings, and every sibling-certified link is deleted jointly.

\begin{algorithm}[h!]
\caption{M2-D: sibling-certified deterministic IRC deletion}
\label{alg:m2d}
\begin{algorithmic}[1]
\Require Fitted tree \(T\)
\Ensure One rule \(\CGR_{2,D}(L)\) per source leaf
\State Compute \(\SIB(L)\) for every source leaf.
\For{each leaf \(L\) in source order}
  \State \(R\leftarrow\GR(L)\); \(J\leftarrow\SIB(L)\), ordered leaf to root.
  \State Output \(\CGR_{2,D}(L)\leftarrow R^{-J}\) by Theorem~\ref{thm:joint-sibling}.
\EndFor
\end{algorithmic}
\end{algorithm}

\begin{algorithm}[h!]
\caption{M2-P: sibling-certified exact source-tree-preserving deletion}
\label{alg:m2p}
\begin{algorithmic}[1]
\Require Fitted tree \(T\)
\Ensure One rule \(\CGR_{2,P}(L)\) per source leaf
\State Compute \(\SIB(L)\) for every source leaf.
\For{each leaf \(L\) in source order}
  \State \(R\leftarrow\GR(L)\); \(J\leftarrow\SIB(L)\), ordered leaf to root.
  \State Output \(\CGR_{2,P}(L)\leftarrow R^{-J}\) by Theorem~\ref{thm:joint-sibling}.
\EndFor
\end{algorithmic}
\end{algorithm}

\paragraph{Two-sided relevance-aware acceptance for M1-P.}
\textsc{ReliabilitySelect} implements the two-sided relevance-aware acceptance criterion for class-probability mismatch candidates. The subroutine receives the original rule \(R\), class \(c\), ordered candidates \(I\), training sample \(\mathcal S\), and operational tolerance \(\epsilon\), and returns an accepted set \(J\subseteq I\). The helper \textsc{Evaluate} returns the reliability and acceptance decision for a trial set \(Q\). The subroutine evaluates set-level reliability preservation and does not assign intrinsic relevance labels to individual links.

\begin{algorithm}[h!]
\caption{\textsc{ReliabilitySelect}: two-sided relevance-aware acceptance for M1-P}
\label{alg:reliability-select}
\begin{algorithmic}[1]
\Require Rule \(R\), class \(c\), ordered candidates \(I\), \(\mathcal S\), \(\epsilon\), cached condition masks
\Ensure Accepted deletion set \(J\subseteq I\)
\State Compute \(p_{\mathrm{base}}\leftarrow\hat p_{c,\mathcal S}(R)\); initialize trial cache \(\mathcal C_R\).
\Function{Evaluate}{Q}
    \If{\(Q\notin\mathcal C_R\)}
        \State Form the mask for \(R^{-Q}\) by intersecting retained-condition masks.
        \State \(p_Q\leftarrow\hat p_{c,\mathcal S}(R^{-Q})\).
        \State \(a_Q\leftarrow[|p_Q-p_{\mathrm{base}}|\le\epsilon]\).
        \State Cache \((p_Q,a_Q)\) in \(\mathcal C_R\).
    \EndIf
    \State \Return \(\mathcal C_R[Q]\).
\EndFunction
\State \((p_I,a_I)\leftarrow\Call{Evaluate}{I}\).
\If{\(a_I\)}
    \State \Return \(I\).
\EndIf
\State \(J\leftarrow\emptyset\).
\For{each \(e\) in the ordered candidate list \(I\)}
    \State \(Q\leftarrow J\cup\{e\}\).
    \State \((p_Q,a_Q)\leftarrow\Call{Evaluate}{Q}\).
    \If{\(a_Q\)}
        \State \(J\leftarrow Q\).
    \EndIf
\EndFor
\State \Return \(J\).
\end{algorithmic}
\end{algorithm}

The tolerance \(\epsilon\) is operational rather than a sampling-confidence guarantee. Antecedent deletion broadens a nonempty leaf region, so the resulting rule cannot have lower training support than its original leaf rule.
\FloatBarrier

\subsubsection{Procedure Flow Charts}

Figures~\ref{fig:flow-m1} and~\ref{fig:flow-m2} summarize the setting-specific procedures. The mismatch-guided flow sends C1/C0 candidates to either the joint deterministic certificate or the \(H_c(L)\)-protected reliability selector, according to the rule setting. The sibling-certified flow applies the sample-independent sibling certificate in both settings and then assigns the corresponding output notation.

\begin{figure}[h!]
\centering
\resizebox{0.98\linewidth}{!}{%
\begin{tikzpicture}[node distance=0.65cm and 1.0cm,>=Stealth,
box/.style={rectangle,rounded corners=2pt,draw,align=center,font=\scriptsize,minimum width=3.6cm,minimum height=0.55cm},
dia/.style={diamond,draw,align=center,font=\scriptsize,aspect=2.0,inner sep=1pt}]
\node[box,fill=gray!8] (start) {Leaf rule \(\GR(L)\)};
\node[box,below=of start] (ann) {C1/C0 link annotation};
\node[dia,below=0.75cm of ann] (mode) {rule setting?};
\node[box,below left=1.0cm and 2.4cm of mode] (dcan) {M1-D: collect all\\strict mismatches};
\node[box,below right=1.0cm and 2.4cm of mode] (phom) {M1-P: compute \(H_c(L)\)};
\node[dia,below=0.70cm of dcan] (dcert) {complete set passes\\joint certificate?};
\node[box,below=0.65cm of phom] (pcan) {collect IRC-presumed mismatches\\outside \(H_c(L)\)};
\node[box,below left=0.78cm and 0.45cm of dcert] (ddel) {delete complete\\mismatch set};
\node[box,below right=0.78cm and 0.45cm of dcert] (dkeep) {retain source\\rule};
\node[box,below=0.65cm of pcan] (psel) {apply \textsc{ReliabilitySelect}};
\node[box,below=1.65cm of dcert] (dout) {output \(\CGR_{1,D}(L)\)};
\node[box,below=0.88cm of psel] (pout) {output \(\CGR_{1,P}(L)\)};
\draw[->] (start)--(ann);
\draw[->] (ann)--(mode);
\draw[->] (mode)--node[left,font=\tiny]{deterministic}(dcan);
\draw[->] (mode)--node[right,font=\tiny]{class probability}(phom);
\draw[->] (dcan)--(dcert);
\draw[->] (phom)--(pcan);
\draw[->] (pcan)--(psel);
\draw[->] (dcert)--node[left,font=\tiny]{yes}(ddel);
\draw[->] (dcert)--node[right,font=\tiny]{no}(dkeep);
\draw[->] (ddel)--(dout);
\draw[->] (dkeep)--(dout);
\draw[->] (psel)--(pout);
\end{tikzpicture}%
}
\caption{Mismatch-guided procedure (Method~1). C1/C0 annotation supplies structural candidates; M1-D deletes the complete mismatch set only when it passes a joint hard-implication certificate, whereas M1-P protects negative-relevance-presumed mismatches inside \(H_c(L)\) and applies set-level reliability selection outside the protected subtree.}
\label{fig:flow-m1}
\end{figure}

\begin{figure}[h!]
\centering
\resizebox{0.72\linewidth}{!}{%
\begin{tikzpicture}[node distance=0.72cm,>=Stealth,
box/.style={rectangle,rounded corners=2pt,draw,align=center,font=\small,minimum width=5.0cm,minimum height=0.65cm},
dia/.style={diamond,draw,align=center,font=\small,aspect=2.3,inner sep=1pt}]
\node[box,fill=gray!8] (start) {Leaf rule \(\GR(L)\), \(c=\Ree(L)\)};
\node[box,below=of start] (sib) {compute the sibling-certified set \(\SIB(L)\)};
\node[box,below=of sib] (del) {delete the complete set \(\SIB(L)\) jointly\\by the rootmost-divergence certificate};
\node[dia,below=0.85cm of del] (mode) {rule setting?};
\node[box,below left=0.95cm and 1.15cm of mode] (dout) {output \(\CGR_{2,D}(L)\)};
\node[box,below right=0.95cm and 1.15cm of mode] (pout) {output \(\CGR_{2,P}(L)\)};
\draw[->] (start)--(sib);
\draw[->] (sib)--(del);
\draw[->] (del)--(mode);
\draw[->] (mode)--node[left,font=\tiny]{deterministic}(dout);
\draw[->] (mode)--node[right,font=\tiny]{class probability}(pout);
\end{tikzpicture}%
}
\caption{Sibling-certified procedure (Method~2). The complete sibling-certified set is deleted by the same sample-independent exact-preservation transformation before the deterministic or class-probability output is identified.}
\label{fig:flow-m2}
\end{figure}
\FloatBarrier

\subsection{Complexity}
\label{sec:complexity}

Let \(V\) be the number of tree nodes, \(n_L\) the number of leaf rules, \(D\) the maximum path length, \(M\) the number of training instances, and \(k_i\le d_i\le D\) the M1-P candidate count and original length of rule \(i\). Shared preprocessing for link orientation, descendant-label sets, and sibling candidates requires \(O(V+n_LD)\) time.

M1-P additionally constructs cached link masks in \(O(MV)\) time. For rule \(i\), one baseline trial, one joint-deletion trial, and at most \(k_i\) fallback trials each intersect at most \(d_i\) masks over \(M\) instances. Therefore,
\[
T_{\mathrm{M1-P}}
=O\!\left(
V+n_LD+MV+
M\sum_{i=1}^{n_L}d_i(2+k_i)
\right)
\subseteq
O\!\left(V+n_LD+MV+Mn_LD^2\right),
\]
with \(O(MV+n_LD+M)\) auxiliary space.

M2-P uses only tree topology and path entries, giving \(T_{\mathrm{M2-P}}=O(V+n_LD)\) time and \(O(V+n_LD)\) auxiliary space, independent of \(M\). The deterministic variants have corresponding bounds: \(T_{\mathrm{M2-D}}=O(V+n_LD)\), while \(T_{\mathrm{M1-D}}=O(V+n_LD+C_{\mathrm{cert}})\), where \(C_{\mathrm{cert}}\) is the joint-certificate cost. Precomputed monotone feature bounds give \(C_{\mathrm{cert}}=O(n_LD)\); direct verification on a finite table of \(M_0\) rows instead requires \(O(M_0\sum_{i=1}^{n_L}d_i)\subseteq O(M_0n_LD)\).

For the implementations considered here, repeated condition rescoring in the Quinlan procedure \citep{quinlan1987generating} gives \(T_{\mathrm{Quinlan}}=O(M\sum_{i=1}^{n_L}d_i^3)\subseteq O(Mn_LD^3)\), whereas opposite-class path comparisons in the Izza procedure \citep{izza2022tackling} give \(T_{\mathrm{Izza}}=O(n_L^2D^2)\).

One output rule is retained for each source leaf, and the full binary source trees considered here satisfy \(V=2n_L-1\). With \(M\) fixed and lower-order preprocessing terms suppressed, M2 and M1-P scale as \(O(n_LD)\) and \(O(n_LD^2)\), respectively, whereas the Izza and Quinlan procedures scale as \(O(n_L^2D^2)\) and \(O(n_LD^3)\). Thus, the proposed methods retain linear dependence on the number of rules; M2 is linear and M1-P is quadratic in path length.

For fixed \(n_L\) and \(D\), M2 and the Izza procedure are \(O(1)\) in \(M\), whereas M1-P and the Quinlan procedure are \(O(M)\). These bounds isolate the principal sources of growth but do not impose a strict runtime ordering for every combination of \(M\), \(n_L\), and \(D\).

\FloatBarrier

\section{Experiments}
\label{sec:experiments}

The experiments compare the broader mismatch-guided and narrower sibling-certified structural deletion mechanisms with established IRC-deletion procedures. The deterministic variants M1-D and M2-D are first evaluated on decision tables that permit exhaustive case-wise checking. The class-probability evaluation then examines simplification and fidelity: simplification measures how frequently and how strongly source rules are shortened, whereas fidelity measures how closely the resulting rule sets retain the predictive and class-wise behavior of the fitted tree. A runtime study evaluates scalability with respect to the training-sample size and the number of leaf rules.

\subsection{Experimental Setting}

The experiments use two deterministic decision tables and eight class-probability datasets. The Weather data contain 14 cases described by Outlook, Temperature, Humidity, and Windy, with Play as the binary decision \citep{liu2018induction}. The Human Identification (human-id) data contain eight cases described by Height, Hair, and Eyes, with a binary \(+/-\) class \citep{pham1995rules}. The remaining eight binary-classification datasets are drawn from the UCI Machine Learning Repository and OpenML \citep{dua2017uci,openml2013}. Table~\ref{tab:datasets-all} summarizes the inputs used for tree induction.

\begin{table}[h!]
\centering
\caption{Datasets used in the deterministic and class-probability evaluations.}
\label{tab:datasets-all}
\small
\begin{tabular}{llrl}
\toprule
Dataset & Setting & Instances & Input features\\
\midrule
Weather    & Deterministic & 14     & 10\\
human-id   & Deterministic & 8      & 7\\
\addlinespace
adult      & Class-probability & 30,162 & 104\\
backache   & Class-probability & 310    & 12\\
cancer     & Class-probability & 683    & 9\\
EEG        & Class-probability & 14,980 & 14\\
german     & Class-probability & 1,000  & 61\\
heart-h    & Class-probability & 294    & 286\\
ionosphere & Class-probability & 351    & 34\\
spambase   & Class-probability & 4,601  & 57\\
\bottomrule
\end{tabular}
\end{table}

For the deterministic evaluation, all nominal levels, including both Windy values, are represented by full one-hot indicator columns. An unpruned Gini CART with deterministic best-split selection is fitted to each listed decision table and is required to reproduce every supplied class label. M1-D accepts a complete strict-mismatch set only when direct enumeration validates the weakened rule on all supplied cases; M2-D applies the sibling-leaf certificate; and the Izza procedure \citep{izza2022tackling} applies exact fitted-tree path-redundancy testing. The post-deletion audit checks rule validity, cumulative coverage, agreement with source-tree predictions, and opposite-class conflict on every listed case.

For each class-probability dataset, the evaluation reuses 30 stratified 70/30 train--test splits and the corresponding depth-6 CART trees using Gini impurity and saved depth-6 IAI trees representing optimization-based induction \citep{breiman1984classification,bertsimas2017optimal}. M1-P, M2-P, and the procedures of \citet{quinlan1987generating} and \citet{izza2022tackling} receive the same source tree and leaf-rule set for every learner, dataset, and split. For M1-P, \(\epsilon\in\{0,0.01,0.03,0.05,0.10,0.20\}\) is selected separately for each dataset by descriptive grid optimization and shared by CART and IAI\@. The selection maximizes mean condition deletion subject to nonnegative mean accuracy change and a 5\% mean-conflict ceiling for each learner. M2-P uses only the fitted-tree topology.

Where multiple rules cover the same instance, M1-P and the Quinlan procedure rank them by decreasing training reliability, decreasing training support, increasing rule length, and source-leaf order. M2-P and the Izza procedure have zero opposite-class rule conflict, so their predictions are invariant to this ranking. Results are averaged first across the 30 paired splits within each dataset and then equally across the eight dataset means.

\subsection{Deterministic Rule Setting}

Table~\ref{tab:paper0714-deterministic-suite} compares M1-D, M2-D, and the Izza procedure on Weather and human-id. Each source CART fits its listed decision table exactly. The audit enumerates every supplied case and verifies class-valid simplified rules, complete cumulative coverage, unchanged source-tree predictions, and zero opposite-class rule conflict.

% Generated by experiments/paper0714_deterministic_tables.py.
\begin{table}[h!]
\centering
\caption{Three-method validation on the observed deterministic decision tables. RF=1 is the number of simplified rules with empirical reliability one on every covered case. Coverage and conflict are evaluated on all supplied cases. Categorical attributes are represented by full one-hot indicators before fitting the unpruned CART source.}
\label{tab:paper0714-deterministic-suite}
\scriptsize
\setlength{\tabcolsep}{3pt}
\begin{tabular}{llrrrrrrr}
\toprule
Dataset & Method & GR & Removed (\%) & \shortstack{Rules mod.\\(\%)} & $\Delta$length & RF=1 & Coverage & Conflict \\
\midrule
Weather & M1-D & 7 & 30.43 & 71.43 & 1.00 & 7/7 & 100.0 & 0.0 \\
Weather & M2-D & 7 & 21.74 & 57.14 & 0.71 & 7/7 & 100.0 & 0.0 \\
Weather & \citet{izza2022tackling} & 7 & 30.43 & 71.43 & 1.00 & 7/7 & 100.0 & 0.0 \\
\addlinespace
human-id & M1-D & 3 & 20.00 & 33.33 & 0.33 & 3/3 & 100.0 & 0.0 \\
human-id & M2-D & 3 & 20.00 & 33.33 & 0.33 & 3/3 & 100.0 & 0.0 \\
human-id & \citet{izza2022tackling} & 3 & 20.00 & 33.33 & 0.33 & 3/3 & 100.0 & 0.0 \\
\bottomrule
\end{tabular}
\end{table}

M1-D matches the Izza procedure on both deterministic tables and removes more conditions than M2-D on Weather. The Weather source contains 23 conditions across seven rules: M1-D and the Izza procedure each remove seven conditions (30.43\%) from five rules, whereas M2-D removes five conditions (21.74\%) from four rules. Thus, M1-D removes two additional conditions, or 8.69 percentage points more, because finite-domain verification accepts two conditions without a same-class sibling-leaf certificate. On human-id, all three procedures remove the same one of five conditions (20.00\%) from one of three rules. Every simplified rule set preserves complete listed-case coverage, source-tree predictions, and zero opposite-class rule conflict.

\subsection{Class-Probability Rule Setting}

The class-probability evaluation compares the relevance-aware mismatch-guided variant M1-P and the sibling-certified exact-preservation variant M2-P with the Quinlan procedure \citep{quinlan1987generating} and the Izza procedure \citep{izza2022tackling} on the eight datasets in Table~\ref{tab:datasets-all}. Table~\ref{tab:evaluation-metrics} defines the three evaluation dimensions: simplification, fidelity, and scalability.

\begin{table}[h!]
\centering
\caption{Evaluation criteria for the class-probability experiments.}
\label{tab:evaluation-metrics}
\small
\begin{tabularx}{\linewidth}{@{}p{2.2cm}p{3.0cm}X@{}}
\toprule
Criterion & Metric & Interpretation\\
\midrule
Simplification & RuleIRC (\%) & Percentage of source leaf rules from which at least one antecedent condition is deleted.\\
& IRC/rule (\%) & Average within-rule percentage of deleted antecedents, calculated only over rules containing at least one deletion.\\
\addlinespace
Fidelity & \(\Delta\)Accuracy (pp) & Test-accuracy change of the simplified rule set relative to the fitted source tree.\\
& \(|\Delta P_c|,|\Delta R_c|\) (pp) & Absolute class-\(c\) precision and recall changes relative to the fitted source tree.\\
& MacroDev (pp) & Mean absolute deviation of \(P_0,R_0,P_1,R_1\) from their source-tree values.\\
& Rule conflict (\%) & Percentage of test instances covered by rules that predict different classes.\\
\addlinespace
Scalability & Runtime (s) & Simplification time as the number of leaf rules or the sample size increases.\\
\bottomrule
\end{tabularx}
\end{table}

\subsubsection{Simplification}

Table~\ref{tab:paper0714-m1m2-simplification} reports the percentage of source rules containing at least one deleted condition and the average within-rule deletion percentage among affected rules. The relevance-aware mismatch-guided variant M1-P shortens 72.66\% of IAI rules and 70.98\% of CART rules, approximately 2.22 and 2.35 times the corresponding M2-P rates. Relative to the Izza procedure, the M1-P RuleIRC rate is higher by 40.00 percentage points for IAI and 30.63 percentage points for CART\@. Within affected rules, M1-P deletes 34.88\% and 37.13\% of antecedents, exceeding M2-P by 11.63 and 12.97 percentage points and the Izza procedure by 11.13 and 13.16 percentage points. The Quinlan procedure remains the most aggressive, shortening 96.32\%/98.05\% of rules and deleting 63.43\%/64.49\% of antecedents within affected rules. The results place broader reliability-controlled structural deletion between the empirical Quinlan procedure and the source-prediction-preserving M2-P and Izza procedures.

% Generated by experiments/build_paper0714_m1m2.py.
\begin{table}[h!]
\centering
\caption{Rule simplification on the same saved source trees. RuleIRC is the percentage of source rules from which at least one condition is deleted. IRC/rule is the mean percentage of antecedents deleted within those affected rules. Values are dataset-balanced mean $\pm$ SD across eight dataset means.}
\label{tab:paper0714-m1m2-simplification}
\small
\begin{tabular}{llrr}
\toprule
Learner & Method & RuleIRC (\%) & IRC/rule (\%) \\
\midrule
IAI & M1-P & 72.66 $\pm$ 21.57 & 34.88 $\pm$ 8.71 \\
IAI & M2-P & 32.70 $\pm$ 10.99 & 23.25 $\pm$ 3.77 \\
IAI & \citet{quinlan1987generating} & 96.32 $\pm$ 3.19 & 63.43 $\pm$ 14.06 \\
IAI & \citet{izza2022tackling} & 32.66 $\pm$ 10.84 & 23.75 $\pm$ 3.84 \\
\addlinespace
CART & M1-P & 70.98 $\pm$ 17.25 & 37.13 $\pm$ 8.86 \\
CART & M2-P & 30.15 $\pm$ 15.46 & 24.16 $\pm$ 5.94 \\
CART & \citet{quinlan1987generating} & 98.05 $\pm$ 1.84 & 64.49 $\pm$ 11.63 \\
CART & \citet{izza2022tackling} & 40.35 $\pm$ 7.44 & 23.97 $\pm$ 5.47 \\
\bottomrule
\end{tabular}
\end{table}

\subsubsection{Fidelity}

Greater condition deletion does not by itself establish better fidelity. Deleting an antecedent expands the covered rule region and can trade compactness for class-wise deviation or opposite-class overlap. Tables~\ref{tab:paper0714-m1m2-fidelity} and~\ref{tab:paper0714-m1m2-macrodev} quantify this trade-off through accuracy change, rule conflict, absolute class-wise changes, and MacroDev; absolute accuracy is omitted because every comparison uses the same fitted source tree.

% Generated by experiments/build_paper0714_m1m2.py.
\begin{table}[h!]
\centering
\caption{Prediction preservation after rule simplification. $\Delta$Accuracy is simplified-rule-set accuracy minus source-tree accuracy; conflict is the percentage of test instances covered by rules predicting different classes. Values are dataset-balanced mean $\pm$ SD across eight dataset means.}
\label{tab:paper0714-m1m2-fidelity}
\small
\begin{tabular}{llrr}
\toprule
Learner & Method & $\Delta$Accuracy (pp) & Conflict (\%) \\
\midrule
IAI & M1-P & +0.51 $\pm$ 0.44 & 2.82 $\pm$ 1.77 \\
IAI & M2-P & 0.00 $\pm$ 0.00 & 0.00 $\pm$ 0.00 \\
IAI & \citet{quinlan1987generating} & +0.75 $\pm$ 1.33 & 50.59 $\pm$ 41.18 \\
IAI & \citet{izza2022tackling} & 0.00 $\pm$ 0.00 & 0.00 $\pm$ 0.00 \\
\addlinespace
CART & M1-P & +0.63 $\pm$ 0.48 & 2.31 $\pm$ 1.49 \\
CART & M2-P & 0.00 $\pm$ 0.00 & 0.00 $\pm$ 0.00 \\
CART & \citet{quinlan1987generating} & +1.04 $\pm$ 1.28 & 55.44 $\pm$ 37.43 \\
CART & \citet{izza2022tackling} & 0.00 $\pm$ 0.00 & 0.00 $\pm$ 0.00 \\
\bottomrule
\end{tabular}
\end{table}

% Generated by experiments/build_paper0714_m1m2.py.
\begin{table}[h!]
\centering
\caption{Class-wise prediction deviations relative to the fitted source tree. MacroDev is the mean of the four absolute precision and recall deviations. Values are dataset-balanced mean $\pm$ SD across eight dataset means.}
\label{tab:paper0714-m1m2-macrodev}
\scriptsize
\begin{tabular}{llrrrrr}
\toprule
Learner & Method & $|\Delta P_0|$ (pp) & $|\Delta R_0|$ (pp) & $|\Delta P_1|$ (pp) & $|\Delta R_1|$ (pp) & MacroDev (pp) \\
\midrule
IAI & M1-P & 0.78 $\pm$ 0.76 & 0.90 $\pm$ 0.79 & 1.02 $\pm$ 0.64 & 1.20 $\pm$ 1.07 & 0.98 $\pm$ 0.66 \\
IAI & M2-P & 0.00 $\pm$ 0.00 & 0.00 $\pm$ 0.00 & 0.00 $\pm$ 0.00 & 0.00 $\pm$ 0.00 & 0.00 $\pm$ 0.00 \\
IAI & \citet{quinlan1987generating} & 4.22 $\pm$ 3.84 & 6.94 $\pm$ 6.23 & 5.36 $\pm$ 5.25 & 7.59 $\pm$ 10.00 & 6.03 $\pm$ 5.29 \\
IAI & \citet{izza2022tackling} & 0.00 $\pm$ 0.00 & 0.00 $\pm$ 0.00 & 0.00 $\pm$ 0.00 & 0.00 $\pm$ 0.00 & 0.00 $\pm$ 0.00 \\
\addlinespace
CART & M1-P & 0.88 $\pm$ 1.00 & 0.80 $\pm$ 0.63 & 1.02 $\pm$ 0.69 & 1.25 $\pm$ 1.03 & 0.99 $\pm$ 0.66 \\
CART & M2-P & 0.00 $\pm$ 0.00 & 0.00 $\pm$ 0.00 & 0.00 $\pm$ 0.00 & 0.00 $\pm$ 0.00 & 0.00 $\pm$ 0.00 \\
CART & \citet{quinlan1987generating} & 3.55 $\pm$ 3.04 & 5.90 $\pm$ 4.65 & 5.19 $\pm$ 5.23 & 6.82 $\pm$ 8.07 & 5.36 $\pm$ 4.47 \\
CART & \citet{izza2022tackling} & 0.00 $\pm$ 0.00 & 0.00 $\pm$ 0.00 & 0.00 $\pm$ 0.00 & 0.00 $\pm$ 0.00 & 0.00 $\pm$ 0.00 \\
\bottomrule
\end{tabular}
\end{table}

The broader reliability-controlled variant M1-P removes 27.19\% of IAI conditions and 27.30\% of CART conditions, exceeding M2-P by 18.84 and 18.71 percentage points and the Izza procedure by 18.89 and 16.53 percentage points. The broader deletion is accompanied by conflict rates of 2.82\%/2.31\%, MacroDev values of 0.98/0.99 percentage points, and mean accuracy changes of \(+0.51\)/\(+0.63\) percentage points.

The sibling-certified exact-preservation variant M2-P and the Izza procedure produce zero accuracy change, zero class-wise deviation, and zero opposite-class rule conflict. The M2-P outcomes agree with the structural preservation result for the fitted source tree.

The Quinlan procedure removes 62.21\%/63.60\% of all conditions, 35.02/36.30 percentage points more than M1-P, but its dataset-balanced conflict rates reach 50.59\%/55.44\% and its MacroDev reaches 6.03/5.36 percentage points. The positive mean accuracy changes of \(+0.75\)/\(+1.04\) percentage points therefore do not establish preservation of source-tree class-wise or rule-set behavior.

\subsubsection{Scalability}

The two-panel runtime analysis in Figure~\ref{fig:paper0714-m1m2-runtime} examines the scaling dimensions identified in Section~\ref{sec:complexity}. Figure~\ref{fig:runtime-fixed-sample} fixes the training-sample size at \(M=2{,}048\) and varies the leaf-rule target over \(4,8,16,32,64,128\). Figure~\ref{fig:runtime-fixed-tree} holds a 32-leaf fitted tree and source-rule signature fixed and evaluates five training-sample sizes from \(M=4{,}096\) to \(M=1{,}048{,}576\). All four procedures receive the same fitted CART and source rules. Three tree seeds, one warm-up, three timed repetitions, and rotating method order are used; the curves show the median and interquartile range.

\begin{figure}[h!]
\centering
\begin{subfigure}[t]{0.49\textwidth}
\centering
\includegraphics[width=\linewidth]{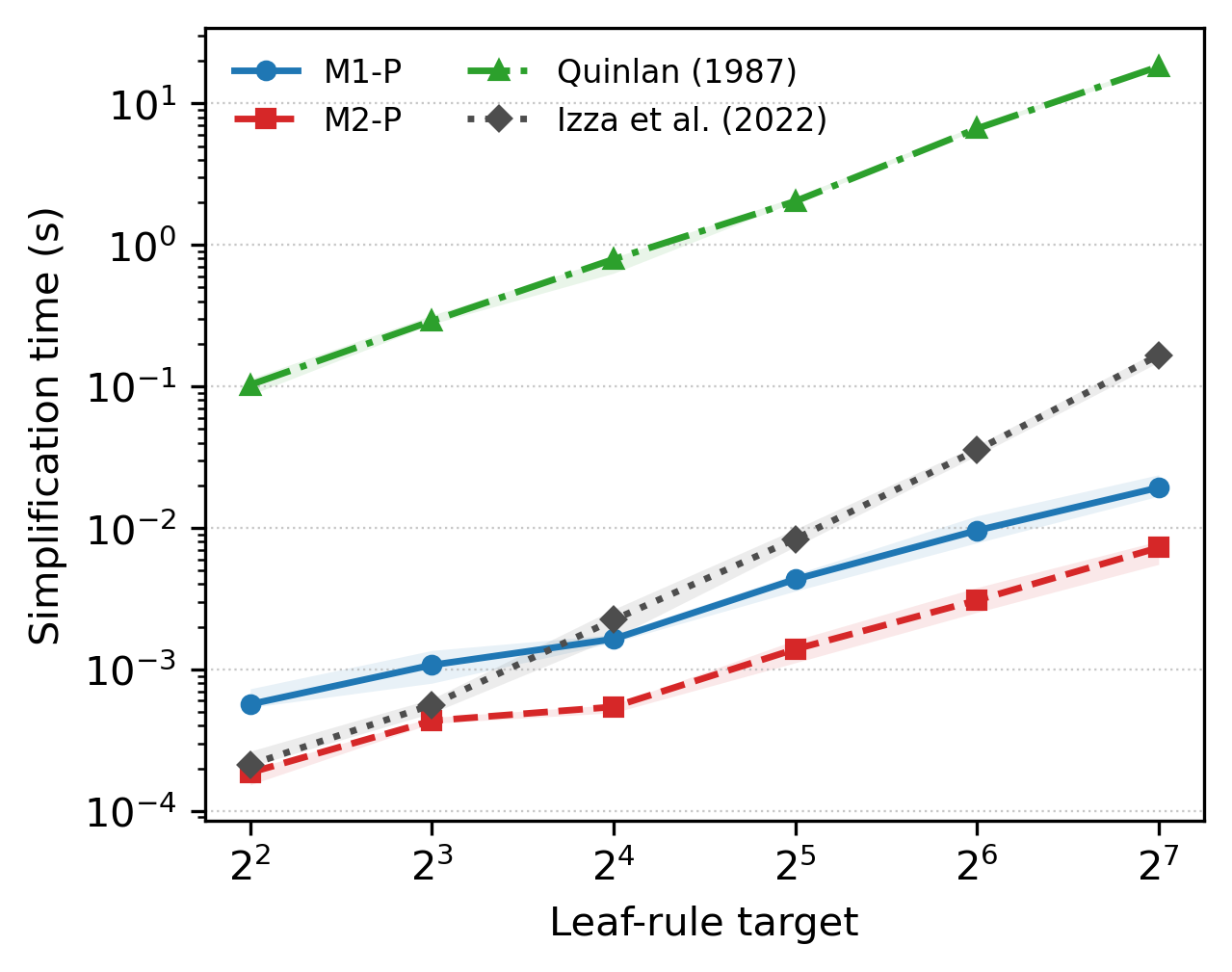}
\caption{Fixed sample size (\(M=2{,}048\))}
\label{fig:runtime-fixed-sample}
\end{subfigure}\hfill
\begin{subfigure}[t]{0.49\textwidth}
\centering
\includegraphics[width=\linewidth]{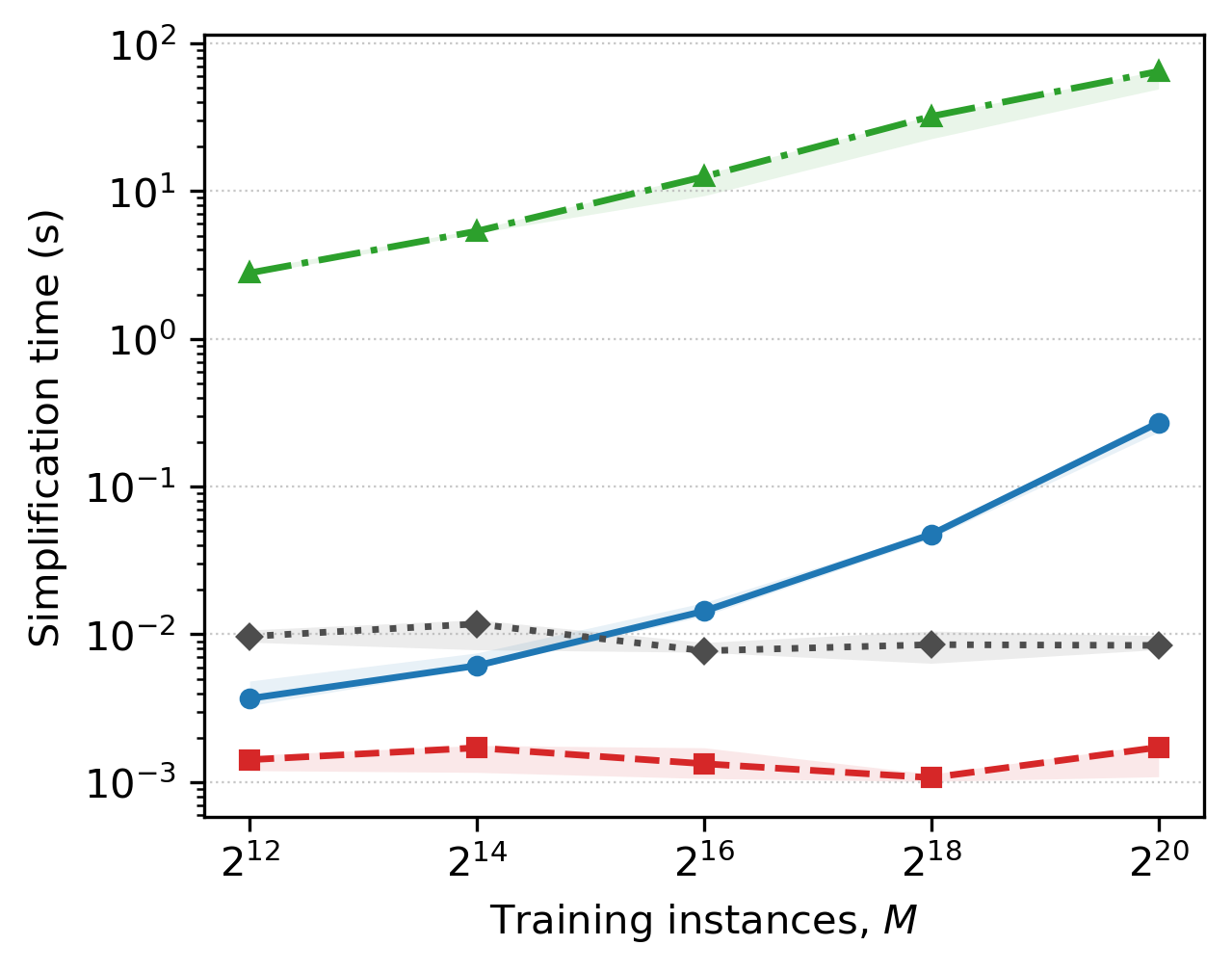}
\caption{Fixed tree size (32 leaves)}
\label{fig:runtime-fixed-tree}
\end{subfigure}
\caption{Simplification-only runtime on identical CART sources for M1-P, M2-P, the Quinlan procedure, and the Izza procedure. Panel (a) varies the leaf-rule target at a fixed training-sample size. Panel (b) fixes one CART per seed and varies the training-sample size through nested subsets that retain every source leaf. Points are medians and bands are interquartile ranges over 3 calls on each of 3 independently generated trees.}
\label{fig:paper0714-m1m2-runtime}
\end{figure}

M2-P and the Izza procedure do not scan the training sample during simplification, and their runtime curves remain nearly flat as \(M\) increases. At 128 leaves, M2-P requires 0.0073~s, compared with 0.166~s for the Izza procedure, making M2-P approximately 23 times faster in this setting. M1-P evaluates instance-level reliability and therefore increases with \(M\), consistently with its linear sample-size term. Cached condition masks nevertheless keep the leaf-scaling cost low: at 128 leaves, M1-P requires 0.0193~s, compared with 18.223~s for the Quinlan procedure, making M1-P approximately 944 times faster in this setting. At \(M=1{,}048{,}576\) with the 32-leaf tree fixed, M1-P requires 0.271~s versus 64.722~s for the Quinlan procedure, and M2-P requires 0.0017~s versus 0.0084~s for the Izza procedure. The corresponding measured speedups are approximately 239-fold and 4.9-fold.

\paragraph{Summary.}
The mismatch-guided class-probability variant M1-P occupies an intermediate simplification regime between the conservative Izza procedure and the aggressive Quinlan procedure. Protection of negative-relevance-presumed mismatches inside \(H_c(L)\) and two-sided reliability selection outside the protected subtree retain low rule conflict and small class-wise deviation while providing substantially faster processing than the Quinlan procedure. The sibling-certified variant M2-P follows a conservative simplification pattern close to exact path-redundancy deletion, exactly preserves the fitted source-tree predictions, and produces zero rule conflict. Within the measured runtime ranges, M2-P is also faster than the Izza procedure, yielding the fastest topology-only simplification among the four procedures.

\FloatBarrier

\section{Discussion}
\label{sec:discussion}

\noindent\textbf{Why are shorter rules not necessarily relevance-aware?}
Condition relevance is a path-level effect of deletion rather than a local property of a split or link. For a single condition \(e\), \(D_c(e;R)<-\epsilon\) denotes a positive-relevance effect because deletion lowers leaf-class reliability; \(D_c(e;R)>\epsilon\) denotes a negative-relevance effect because deletion raises reliability but may erase a lower-reliability same-label refinement; and \(|D_c(e;R)|\le\epsilon\) denotes a probabilistic IRC effect. For multiple conditions, only reliability-decreasing, reliability-increasing, or reliability-preserving set-level terminology applies because individual effects may interact. Rule length and condition deletion must therefore be interpreted together with reliability change, coverage, rule conflict, accuracy, and class-wise deviation.

\noindent\textbf{What does mismatch evidence reveal?}
C1/C0 orientation records the local class-proportion change induced by a split. Relative to a class-\(c\) leaf, a C\(c\) link is matched and a C\((1-c)\) link is mismatched, but neither status alone determines condition relevance. The mismatch-guided mechanism preserves matched links by construction, treats mismatches inside \(H_c(L)\) as negative-relevance-presumed refinements, and treats mismatches outside \(H_c(L)\) as IRC-presumed candidates. Deterministic deletion requires a joint hard-implication certificate, whereas class-probability deletion applies the two-sided effect \(D_c(Q;R)\). The resulting M1-D and M1-P variants therefore provide broader structural deletion under explicit setting-specific control.

\noindent\textbf{What does sibling evidence certify?}
Same-class sibling-leaf topology supplies an exact local certificate independently of C1/C0 orientation and \(D_c(Q;R)\). A sibling-certified link may be matched or located inside \(H_c(L)\), so the sibling and mismatch candidate sets are non-nested. The rootmost-divergence result guarantees source-tree predictions, full coverage, and zero opposite-class rule conflict for the M2-D and M2-P outputs. The narrower sibling-certified set yields less deletion but provides exact source-tree preservation without \(M\) or \(\epsilon\).

\noindent\textbf{What trade-off defines a relevance-aware rule?}
M1-P removes approximately three times as many source conditions as M2-P for both IAI and CART while keeping aggregate MacroDev below one percentage point and rule conflict below 3\%. M2-P provides narrower topology-only deletion with exact preservation, whereas the Quinlan procedure provides substantially greater deletion with conflict on more than half of the test cases and markedly larger class-wise deviation. Relevance-aware rule generation therefore does not maximize deletion but instead makes the structural evidence, acceptance criterion, and preserved rule meaning explicit.

\section{Conclusion}
\label{sec:conclusion}

Condition relevance in a decision-tree path rule is defined by the effect of deletion on the complete rule, not by a local link orientation alone. C1/C0 orientation and matched/mismatched status provide structural diagnostics that generate candidates without declaring relevance. The mismatch-guided mechanism converts structural candidates into accepted deletions only through a joint deterministic certificate or a two-sided class-probability reliability criterion. Same-class sibling topology supplies a separate exact-preservation certificate independent of the path-level reliability effect. A relevance-aware rule is therefore not a globally shortest rule; the deletion rationale and preservation target remain explicit for every accepted condition set.

On Weather, M1-D, M2-D, and the Izza procedure \citep{izza2022tackling} remove 30.43\%, 21.74\%, and 30.43\% of conditions; on human-id, all three remove 20.00\%. Every deterministic rule set preserves listed-case validity, coverage, source-tree predictions, and zero opposite-class rule conflict. On saved CART and IAI trees, M1-P provides substantially greater deletion than M2-P and the Izza procedure while maintaining small accuracy changes, low conflict, and sub-one-percentage-point aggregate MacroDev. M2-P and the Izza procedure preserve source-tree predictions, whereas the Quinlan procedure \citep{quinlan1987generating} achieves the largest reduction at the cost of the greatest rule conflict and class-wise change. The two-axis runtime results further distinguish sample-dependent reliability evaluation from topology-only deletion. Decision-tree rule simplification should be evaluated by which conditions are removed, why deletion is justified, and what rule meaning is preserved.

\section*{Acknowledgments}

This work was supported by the Basic Science Research Program through the National Research Foundation of Korea (NRF), funded by the Ministry of Education (Grant Number: NRF-2021R1A6A1A03039981). This work was also supported by the NRF under Grant RS-2019-NR040071, funded by the Ministry of Science and ICT.

\FloatBarrier
\begingroup
\small
\setlength{\bibsep}{0.25em}
\bibliographystyle{abbrvnat}
\bibliography{references}
\endgroup

\appendix

\section{Shared Preprocessing and Parameters}
\label{app:shared}

The preprocessing stage converts the fitted tree into reusable structural annotations without deleting conditions. Algorithm~\ref{alg:app-preprocess} computes C1/C0 link orientation, descendant-label sets, sibling-leaf candidates, and label-homogeneous ancestor subtrees once before the leaf rules are processed.

\begin{algorithm}[h!]
\caption{Shared preprocessing for relevance-aware structural IRC deletion}
\label{alg:app-preprocess}
\begin{algorithmic}[1]
\Require Fitted binary decision tree \(T\), numerical tolerance \(\texttt{rf\_tol}\)
\Ensure Link orientations, \(\Lambda(N)\), \(\SIB(L)\), and \(H_c(L)\)
\State Initialize \(\SIB(L)\leftarrow\emptyset\) for every leaf \(L\).
\ForAll{nodes \(N\) in postorder}
  \If{\(N\) is a leaf}
    \State \(\Lambda(N)\leftarrow\{\Ree(N)\}\).
  \Else
    \State \(\Lambda(N)\leftarrow\Lambda(N_l)\cup\Lambda(N_r)\).
  \EndIf
\EndFor
\ForAll{links \(e=(N_p,N_c)\)}
  \State \(\Delta_1(e)\leftarrow p_1(N_c)-p_1(N_p)\).
  \If{\(\Delta_1(e)>\texttt{rf\_tol}\)}
    \State Annotate \(e\) as C1.
  \ElsIf{\(\Delta_1(e)<-\texttt{rf\_tol}\)}
    \State Annotate \(e\) as C0.
  \EndIf
\EndFor
\ForAll{internal nodes \(N_p\) with children \(N_a,N_b\)}
  \ForAll{ordered pairs \((N_a,N_b)\), \((N_b,N_a)\)}
    \If{\(N_a\) is a class-\(c\) leaf and \(N_b\) is internal}
      \ForAll{\(L\in\Leaves(N_b)\) with \(\Ree(L)=c\)}
        \State Add link \((N_p,N_b)\) to \(\SIB(L)\).
      \EndFor
    \EndIf
  \EndFor
\EndFor
\ForAll{leaves \(L\) with \(c=\Ree(L)\)}
  \State Order \(\SIB(L)\) from leaf to root; set \(H_c(L)\leftarrow L\).
  \ForAll{nodes \(S\) from \(\operatorname{parent}(L)\) to the root}
    \If{\(\Lambda(S)=\{c\}\)}
      \State \(H_c(L)\leftarrow S\).
    \EndIf
  \EndFor
\EndFor
\end{algorithmic}
\end{algorithm}
\FloatBarrier

The postorder union computes \(\Lambda(N)\), and the ancestor scan obtains the highest label-homogeneous ancestor used by the mismatch-guided class-probability variant M1-P. Sibling detection records the link into an internal child only for same-class descendant leaves when the opposite child is already a class-\(c\) leaf. The sibling-certified mechanism deletes all such links using the resulting topology alone. The numerical \(\texttt{rf\_tol}\) affects C1/C0 diagnostic annotation but cannot change the M2 deletion set.

Table~\ref{tab:parameters} summarizes the experimental and implementation settings used throughout the analysis.

\begin{table}[h!]
\centering
\caption{Common experimental and implementation parameters.}
\label{tab:parameters}
\small
\begin{tabularx}{\textwidth}{@{}p{0.28\textwidth}L@{}}
\toprule
Component & Setting\\
\midrule
Deterministic data & Weather (14 cases, 4 nominal attributes) and human-id (8 cases, 3 nominal attributes); full one-hot encoding.\\
Deterministic source & Exact-fitting CART on all listed cases; Gini criterion; deterministic best split; no depth limit or pruning.\\
M1-D certificate & Direct joint hard-implication check over every listed case in each deterministic table.\\
Class-probability sources & Eight datasets; 30 saved stratified 70/30 splits; depth-6 CART and saved depth-6 IAI trees.\\
Orientation tolerance & \(\texttt{rf\_tol}=10^{-12}\) for numerical ties in C1/C0 annotation.\\
\citet{quinlan1987generating} & Greedy one-condition deletion; empirical certainty factor or one-sided Fisher acceptance; \(\alpha=0.01\), numerical CF tolerance \(10^{-12}\).\\
Candidate order & Leaf to root for M1-P fallback; all sibling-certified links are deleted jointly by M2.\\
Rule production & One output per source leaf; no deduplication or redundant-rule deletion.\\
Timing design & Fixed training-sample size: \(M=2{,}048\), 4--128 leaves; fixed tree size: 32 leaves, \(M=4{,}096\)--\(1{,}048{,}576\); three tree seeds and three timed repetitions.\\
\bottomrule
\end{tabularx}
\end{table}
\FloatBarrier

\section{Detailed Experimental Results}
\label{app:tables}

The following learner-specific tables report the dataset-level results underlying the three class-probability analyses. Their columns match the main-text simplification and fidelity tables: RuleIRC and IRC/rule quantify simplification; \(\Delta\)Accuracy and conflict summarize prediction preservation; and the absolute class-wise changes with MacroDev describe shifts in precision and recall. IAI and CART are presented separately, and each entry averages the 30 paired splits within one dataset.

Tables~\ref{tab:paper0714-m1m2-by-dataset-simplification-iai} and~\ref{tab:paper0714-m1m2-by-dataset-simplification-cart} show that M1-P has a higher RuleIRC rate than M2-P and the Izza procedure on every dataset. Within affected rules, M1-P also deletes a larger antecedent share on every dataset except adult with IAI, where its 25.08\% deletion rate is slightly below the 26.53\% rate of the Izza procedure. The largest M1-P RuleIRC rates occur on spambase for both IAI and CART, at 91.99\% and 89.80\%, whereas EEG yields the smallest rates, at 33.60\% and 45.82\%.

Tables~\ref{tab:paper0714-m1m2-by-dataset-fidelity-iai}--\ref{tab:paper0714-m1m2-by-dataset-class-cart} show the corresponding preservation pattern. M2-P and the Izza procedure retain zero accuracy change, zero rule conflict, and zero class-wise deviation on every dataset. M1-P conflict remains at or below 4.29\%, and its largest MacroDev is 1.77 percentage points for IAI and 1.97 percentage points for CART, both on backache. The Quinlan procedure reaches 100\% conflict on german and heart-h for IAI and on heart-h for CART, while its largest MacroDev occurs on german at 16.81 and 14.36 percentage points. Spambase is a notable exception: the Quinlan and M1-P MacroDev values are similar, although the Quinlan procedure still produces substantially greater rule conflict.

% Generated by experiments/build_paper0714_m1m2.py.
\begingroup
\scriptsize
\clearpage
\setlength{\tabcolsep}{6pt}
\begin{table}[h!]
\centering
\caption{IAI per-dataset simplification outcomes. RuleIRC is the percentage of source rules with at least one deleted condition; IRC/rule is the mean deleted-antecedent percentage within affected rules. Values are means over 30 matched splits.}
\label{tab:paper0714-m1m2-by-dataset-simplification-iai}
\begin{tabular}{llrr}
\toprule
Dataset & Method & RuleIRC (\%) & IRC/rule (\%) \\
\midrule
adult & M1-P & 50.01 & 25.08 \\
adult & M2-P & 33.08 & 22.81 \\
adult & \citet{quinlan1987generating} & 91.54 & 50.98 \\
adult & \citet{izza2022tackling} & 39.40 & 26.53 \\
backache & M1-P & 82.45 & 33.24 \\
backache & M2-P & 37.45 & 21.80 \\
backache & \citet{quinlan1987generating} & 98.49 & 74.43 \\
backache & \citet{izza2022tackling} & 30.87 & 21.56 \\
cancer & M1-P & 89.45 & 37.86 \\
cancer & M2-P & 42.18 & 25.03 \\
cancer & \citet{quinlan1987generating} & 97.78 & 60.32 \\
cancer & \citet{izza2022tackling} & 38.70 & 24.97 \\
EEG & M1-P & 33.60 & 24.44 \\
EEG & M2-P & 19.59 & 19.08 \\
EEG & \citet{quinlan1987generating} & 93.79 & 42.86 \\
EEG & \citet{izza2022tackling} & 18.18 & 19.54 \\
german & M1-P & 61.94 & 27.21 \\
german & M2-P & 17.36 & 18.04 \\
german & \citet{quinlan1987generating} & 99.71 & 77.27 \\
german & \citet{izza2022tackling} & 17.16 & 18.30 \\
heart-h & M1-P & 84.84 & 46.23 \\
heart-h & M2-P & 39.80 & 26.87 \\
heart-h & \citet{quinlan1987generating} & 100.00 & 83.82 \\
heart-h & \citet{izza2022tackling} & 40.68 & 26.70 \\
ionosphere & M1-P & 86.95 & 40.08 \\
ionosphere & M2-P & 47.29 & 29.27 \\
ionosphere & \citet{quinlan1987generating} & 96.04 & 62.46 \\
ionosphere & \citet{izza2022tackling} & 47.16 & 29.45 \\
spambase & M1-P & 91.99 & 44.86 \\
spambase & M2-P & 24.81 & 23.12 \\
spambase & \citet{quinlan1987generating} & 93.18 & 55.27 \\
spambase & \citet{izza2022tackling} & 29.13 & 22.99 \\
\bottomrule
\end{tabular}
\end{table}
\clearpage

\setlength{\tabcolsep}{6pt}
\begin{table}[h!]
\centering
\caption{CART per-dataset simplification outcomes. RuleIRC is the percentage of source rules with at least one deleted condition; IRC/rule is the mean deleted-antecedent percentage within affected rules. Values are means over 30 matched splits.}
\label{tab:paper0714-m1m2-by-dataset-simplification-cart}
\begin{tabular}{llrr}
\toprule
Dataset & Method & RuleIRC (\%) & IRC/rule (\%) \\
\midrule
adult & M1-P & 52.59 & 29.92 \\
adult & M2-P & 11.21 & 20.58 \\
adult & \citet{quinlan1987generating} & 97.15 & 56.12 \\
adult & \citet{izza2022tackling} & 44.41 & 21.56 \\
backache & M1-P & 77.88 & 34.69 \\
backache & M2-P & 32.89 & 22.68 \\
backache & \citet{quinlan1987generating} & 99.31 & 74.36 \\
backache & \citet{izza2022tackling} & 31.31 & 22.26 \\
cancer & M1-P & 86.09 & 36.61 \\
cancer & M2-P & 41.09 & 22.80 \\
cancer & \citet{quinlan1987generating} & 98.30 & 62.27 \\
cancer & \citet{izza2022tackling} & 39.64 & 23.70 \\
EEG & M1-P & 45.82 & 26.52 \\
EEG & M2-P & 12.32 & 17.77 \\
EEG & \citet{quinlan1987generating} & 99.09 & 48.27 \\
EEG & \citet{izza2022tackling} & 33.77 & 18.48 \\
german & M1-P & 54.25 & 29.25 \\
german & M2-P & 22.76 & 19.87 \\
german & \citet{quinlan1987generating} & 99.74 & 73.82 \\
german & \citet{izza2022tackling} & 37.67 & 19.31 \\
heart-h & M1-P & 83.19 & 49.32 \\
heart-h & M2-P & 47.04 & 29.54 \\
heart-h & \citet{quinlan1987generating} & 99.84 & 82.95 \\
heart-h & \citet{izza2022tackling} & 49.05 & 29.76 \\
ionosphere & M1-P & 78.22 & 49.41 \\
ionosphere & M2-P & 51.73 & 36.08 \\
ionosphere & \citet{quinlan1987generating} & 96.20 & 62.57 \\
ionosphere & \citet{izza2022tackling} & 51.91 & 34.56 \\
spambase & M1-P & 89.80 & 41.30 \\
spambase & M2-P & 22.14 & 23.98 \\
spambase & \citet{quinlan1987generating} & 94.78 & 55.60 \\
spambase & \citet{izza2022tackling} & 35.02 & 22.13 \\
\bottomrule
\end{tabular}
\end{table}
\clearpage

\setlength{\tabcolsep}{6pt}
\begin{table}[h!]
\centering
\caption{IAI per-dataset prediction-preservation outcomes. $\Delta$Accuracy is relative to the fitted source tree. Values are means over 30 matched splits.}
\label{tab:paper0714-m1m2-by-dataset-fidelity-iai}
\begin{tabular}{llrr}
\toprule
Dataset & Method & $\Delta$Accuracy (pp) & Conflict (\%) \\
\midrule
adult & M1-P & 0.00 & 0.06 \\
adult & M2-P & 0.00 & 0.00 \\
adult & \citet{quinlan1987generating} & -1.88 & 9.94 \\
adult & \citet{izza2022tackling} & 0.00 & 0.00 \\
backache & M1-P & +0.50 & 4.05 \\
backache & M2-P & 0.00 & 0.00 \\
backache & \citet{quinlan1987generating} & +1.97 & 91.54 \\
backache & \citet{izza2022tackling} & 0.00 & 0.00 \\
cancer & M1-P & +0.85 & 2.85 \\
cancer & M2-P & 0.00 & 0.00 \\
cancer & \citet{quinlan1987generating} & +0.59 & 10.70 \\
cancer & \citet{izza2022tackling} & 0.00 & 0.00 \\
EEG & M1-P & +0.02 & 0.03 \\
EEG & M2-P & 0.00 & 0.00 \\
EEG & \citet{quinlan1987generating} & -0.17 & 23.50 \\
EEG & \citet{izza2022tackling} & 0.00 & 0.00 \\
german & M1-P & +0.34 & 3.60 \\
german & M2-P & 0.00 & 0.00 \\
german & \citet{quinlan1987generating} & +1.00 & 100.00 \\
german & \citet{izza2022tackling} & 0.00 & 0.00 \\
heart-h & M1-P & +0.42 & 4.17 \\
heart-h & M2-P & 0.00 & 0.00 \\
heart-h & \citet{quinlan1987generating} & +2.23 & 100.00 \\
heart-h & \citet{izza2022tackling} & 0.00 & 0.00 \\
ionosphere & M1-P & +1.33 & 4.29 \\
ionosphere & M2-P & 0.00 & 0.00 \\
ionosphere & \citet{quinlan1987generating} & +1.65 & 54.73 \\
ionosphere & \citet{izza2022tackling} & 0.00 & 0.00 \\
spambase & M1-P & +0.62 & 3.54 \\
spambase & M2-P & 0.00 & 0.00 \\
spambase & \citet{quinlan1987generating} & +0.62 & 14.35 \\
spambase & \citet{izza2022tackling} & 0.00 & 0.00 \\
\bottomrule
\end{tabular}
\end{table}
\clearpage

\setlength{\tabcolsep}{6pt}
\begin{table}[h!]
\centering
\caption{CART per-dataset prediction-preservation outcomes. $\Delta$Accuracy is relative to the fitted source tree. Values are means over 30 matched splits.}
\label{tab:paper0714-m1m2-by-dataset-fidelity-cart}
\begin{tabular}{llrr}
\toprule
Dataset & Method & $\Delta$Accuracy (pp) & Conflict (\%) \\
\midrule
adult & M1-P & +0.04 & 0.21 \\
adult & M2-P & 0.00 & 0.00 \\
adult & \citet{quinlan1987generating} & -1.05 & 46.08 \\
adult & \citet{izza2022tackling} & 0.00 & 0.00 \\
backache & M1-P & +1.40 & 4.05 \\
backache & M2-P & 0.00 & 0.00 \\
backache & \citet{quinlan1987generating} & +2.87 & 92.62 \\
backache & \citet{izza2022tackling} & 0.00 & 0.00 \\
cancer & M1-P & +1.02 & 3.48 \\
cancer & M2-P & 0.00 & 0.00 \\
cancer & \citet{quinlan1987generating} & +0.99 & 10.73 \\
cancer & \citet{izza2022tackling} & 0.00 & 0.00 \\
EEG & M1-P & +0.06 & 0.07 \\
EEG & M2-P & 0.00 & 0.00 \\
EEG & \citet{quinlan1987generating} & +0.22 & 52.84 \\
EEG & \citet{izza2022tackling} & 0.00 & 0.00 \\
german & M1-P & +0.29 & 1.84 \\
german & M2-P & 0.00 & 0.00 \\
german & \citet{quinlan1987generating} & +1.37 & 99.58 \\
german & \citet{izza2022tackling} & 0.00 & 0.00 \\
heart-h & M1-P & +0.53 & 2.58 \\
heart-h & M2-P & 0.00 & 0.00 \\
heart-h & \citet{quinlan1987generating} & +2.65 & 100.00 \\
heart-h & \citet{izza2022tackling} & 0.00 & 0.00 \\
ionosphere & M1-P & +0.89 & 3.43 \\
ionosphere & M2-P & 0.00 & 0.00 \\
ionosphere & \citet{quinlan1987generating} & +0.48 & 21.21 \\
ionosphere & \citet{izza2022tackling} & 0.00 & 0.00 \\
spambase & M1-P & +0.79 & 2.79 \\
spambase & M2-P & 0.00 & 0.00 \\
spambase & \citet{quinlan1987generating} & +0.77 & 20.44 \\
spambase & \citet{izza2022tackling} & 0.00 & 0.00 \\
\bottomrule
\end{tabular}
\end{table}
\clearpage

\setlength{\tabcolsep}{2.5pt}
\begin{table}[h!]
\centering
\caption{IAI per-dataset class-wise deviations relative to the fitted source tree. MacroDev is the mean of the four absolute deviations. Values are means over 30 matched splits.}
\label{tab:paper0714-m1m2-by-dataset-class-iai}
\begin{tabular}{llrrrrr}
\toprule
Dataset & Method & $|\Delta P_0|$ (pp) & $|\Delta R_0|$ (pp) & $|\Delta P_1|$ (pp) & $|\Delta R_1|$ (pp) & MacroDev (pp) \\
\midrule
adult & M1-P & 0.05 & 0.05 & 0.01 & 0.02 & 0.03 \\
adult & M2-P & 0.00 & 0.00 & 0.00 & 0.00 & 0.00 \\
adult & \citet{quinlan1987generating} & 10.63 & 16.38 & 3.53 & 2.92 & 8.36 \\
adult & \citet{izza2022tackling} & 0.00 & 0.00 & 0.00 & 0.00 & 0.00 \\
backache & M1-P & 2.14 & 2.56 & 1.07 & 1.32 & 1.77 \\
backache & M2-P & 0.00 & 0.00 & 0.00 & 0.00 & 0.00 \\
backache & \citet{quinlan1987generating} & 9.22 & 9.11 & 3.01 & 6.19 & 6.88 \\
backache & \citet{izza2022tackling} & 0.00 & 0.00 & 0.00 & 0.00 & 0.00 \\
cancer & M1-P & 0.92 & 0.75 & 1.37 & 1.76 & 1.20 \\
cancer & M2-P & 0.00 & 0.00 & 0.00 & 0.00 & 0.00 \\
cancer & \citet{quinlan1987generating} & 1.51 & 1.55 & 2.69 & 2.96 & 2.18 \\
cancer & \citet{izza2022tackling} & 0.00 & 0.00 & 0.00 & 0.00 & 0.00 \\
EEG & M1-P & 0.01 & 0.04 & 0.05 & 0.02 & 0.03 \\
EEG & M2-P & 0.00 & 0.00 & 0.00 & 0.00 & 0.00 \\
EEG & \citet{quinlan1987generating} & 0.79 & 1.10 & 0.81 & 1.77 & 1.12 \\
EEG & \citet{izza2022tackling} & 0.00 & 0.00 & 0.00 & 0.00 & 0.00 \\
german & M1-P & 0.47 & 0.81 & 1.34 & 1.48 & 1.03 \\
german & M2-P & 0.00 & 0.00 & 0.00 & 0.00 & 0.00 \\
german & \citet{quinlan1987generating} & 5.14 & 14.89 & 15.81 & 31.41 & 16.81 \\
german & \citet{izza2022tackling} & 0.00 & 0.00 & 0.00 & 0.00 & 0.00 \\
heart-h & M1-P & 0.50 & 1.13 & 1.45 & 0.83 & 0.98 \\
heart-h & M2-P & 0.00 & 0.00 & 0.00 & 0.00 & 0.00 \\
heart-h & \citet{quinlan1987generating} & 3.47 & 8.45 & 11.06 & 9.48 & 8.11 \\
heart-h & \citet{izza2022tackling} & 0.00 & 0.00 & 0.00 & 0.00 & 0.00 \\
ionosphere & M1-P & 1.66 & 0.70 & 1.23 & 3.33 & 1.73 \\
ionosphere & M2-P & 0.00 & 0.00 & 0.00 & 0.00 & 0.00 \\
ionosphere & \citet{quinlan1987generating} & 2.51 & 2.89 & 4.33 & 5.09 & 3.70 \\
ionosphere & \citet{izza2022tackling} & 0.00 & 0.00 & 0.00 & 0.00 & 0.00 \\
spambase & M1-P & 0.47 & 1.16 & 1.68 & 0.85 & 1.04 \\
spambase & M2-P & 0.00 & 0.00 & 0.00 & 0.00 & 0.00 \\
spambase & \citet{quinlan1987generating} & 0.51 & 1.17 & 1.69 & 0.93 & 1.07 \\
spambase & \citet{izza2022tackling} & 0.00 & 0.00 & 0.00 & 0.00 & 0.00 \\
\bottomrule
\end{tabular}
\end{table}
\clearpage

\setlength{\tabcolsep}{2.5pt}
\begin{table}[h!]
\centering
\caption{CART per-dataset class-wise deviations relative to the fitted source tree. MacroDev is the mean of the four absolute deviations. Values are means over 30 matched splits.}
\label{tab:paper0714-m1m2-by-dataset-class-cart}
\begin{tabular}{llrrrrr}
\toprule
Dataset & Method & $|\Delta P_0|$ (pp) & $|\Delta R_0|$ (pp) & $|\Delta P_1|$ (pp) & $|\Delta R_1|$ (pp) & MacroDev (pp) \\
\midrule
adult & M1-P & 0.15 & 0.08 & 0.02 & 0.04 & 0.07 \\
adult & M2-P & 0.00 & 0.00 & 0.00 & 0.00 & 0.00 \\
adult & \citet{quinlan1987generating} & 6.79 & 10.26 & 2.20 & 2.00 & 5.31 \\
adult & \citet{izza2022tackling} & 0.00 & 0.00 & 0.00 & 0.00 & 0.00 \\
backache & M1-P & 3.04 & 1.89 & 0.95 & 2.01 & 1.97 \\
backache & M2-P & 0.00 & 0.00 & 0.00 & 0.00 & 0.00 \\
backache & \citet{quinlan1987generating} & 9.15 & 7.22 & 3.09 & 5.77 & 6.31 \\
backache & \citet{izza2022tackling} & 0.00 & 0.00 & 0.00 & 0.00 & 0.00 \\
cancer & M1-P & 1.52 & 0.63 & 1.17 & 2.96 & 1.57 \\
cancer & M2-P & 0.00 & 0.00 & 0.00 & 0.00 & 0.00 \\
cancer & \citet{quinlan1987generating} & 1.41 & 1.10 & 2.04 & 2.73 & 1.82 \\
cancer & \citet{izza2022tackling} & 0.00 & 0.00 & 0.00 & 0.00 & 0.00 \\
EEG & M1-P & 0.04 & 0.06 & 0.08 & 0.05 & 0.06 \\
EEG & M2-P & 0.00 & 0.00 & 0.00 & 0.00 & 0.00 \\
EEG & \citet{quinlan1987generating} & 1.43 & 3.23 & 2.26 & 3.97 & 2.72 \\
EEG & \citet{izza2022tackling} & 0.00 & 0.00 & 0.00 & 0.00 & 0.00 \\
german & M1-P & 0.32 & 0.75 & 1.20 & 1.07 & 0.84 \\
german & M2-P & 0.00 & 0.00 & 0.00 & 0.00 & 0.00 \\
german & \citet{quinlan1987generating} & 4.16 & 12.87 & 14.90 & 25.48 & 14.36 \\
german & \citet{izza2022tackling} & 0.00 & 0.00 & 0.00 & 0.00 & 0.00 \\
heart-h & M1-P & 0.48 & 1.13 & 1.58 & 0.94 & 1.03 \\
heart-h & M2-P & 0.00 & 0.00 & 0.00 & 0.00 & 0.00 \\
heart-h & \citet{quinlan1987generating} & 3.32 & 9.52 & 12.24 & 10.21 & 8.82 \\
heart-h & \citet{izza2022tackling} & 0.00 & 0.00 & 0.00 & 0.00 & 0.00 \\
ionosphere & M1-P & 1.06 & 0.50 & 1.10 & 2.11 & 1.19 \\
ionosphere & M2-P & 0.00 & 0.00 & 0.00 & 0.00 & 0.00 \\
ionosphere & \citet{quinlan1987generating} & 1.71 & 1.59 & 2.64 & 3.60 & 2.38 \\
ionosphere & \citet{izza2022tackling} & 0.00 & 0.00 & 0.00 & 0.00 & 0.00 \\
spambase & M1-P & 0.40 & 1.37 & 2.06 & 0.78 & 1.15 \\
spambase & M2-P & 0.00 & 0.00 & 0.00 & 0.00 & 0.00 \\
spambase & \citet{quinlan1987generating} & 0.42 & 1.42 & 2.13 & 0.77 & 1.18 \\
spambase & \citet{izza2022tackling} & 0.00 & 0.00 & 0.00 & 0.00 & 0.00 \\
\bottomrule
\end{tabular}
\end{table}
\endgroup

\FloatBarrier
\end{document}